\documentclass{article}

\usepackage{silence}
\WarningFilter{remreset}{The remreset package}

\PassOptionsToPackage{numbers}{natbib}




\usepackage[final]{neurips_2020}

\usepackage[utf8]{inputenc} 
\usepackage[T1]{fontenc}    
\usepackage{hyperref}       
\usepackage{url}            
\usepackage{booktabs}       
\usepackage{amsfonts}       
\usepackage{nicefrac}       
\usepackage{microtype}      

\usepackage{amsmath,amsfonts,amsthm,amssymb}

\usepackage{color}
\usepackage{comment}

\usepackage{thm-restate}
\usepackage{enumitem}
\usepackage{nicefrac}
\usepackage{caption}
\usepackage{booktabs,siunitx,caption}
\usepackage{wrapfig}
\usepackage{graphicx}
\usepackage{subfigure}

\newcommand{\A}{\mathcal{A}}

\newcommand{\R}{\mathbb{R}}

\renewcommand{\P}{\mathcal{P}}

\numberwithin{equation}{section}
\numberwithin{figure}{section}
\theoremstyle{plain}
	\newtheorem{theorem}{Theorem}[section]

	\newtheorem{lemma}[theorem]{Lemma}

	\newtheorem{fact}[theorem]{Fact}

\theoremstyle{definition}

	\newtheorem*{remark*}{Remark}

\let \cite \citep

\title{A Study on Encodings for Neural Architecture Search}

%

\author{Colin White\\
        Abacus.AI\\
        San Francisco, CA 94103\\
        \texttt{colin@abacus.ai}\\
       \And
       Willie Neiswanger\\
       Stanford University and Petuum, Inc.\\
       Stanford, CA 94305\\
       \texttt{neiswanger@cs.stanford.edu}\\
       \And
        Sam Nolen\\
        Abacus.AI\\
        San Francisco, CA 94103\\
        \texttt{sam@abacus.ai}\\
        \And
        Yash Savani\\
        Abacus.AI\\
        San Francisco, CA 94103\\
        \texttt{yash@abacus.ai}\\
       }

\begin{document}

\maketitle

\begin{abstract}
Neural architecture search (NAS) has been extensively studied in the 
past few years. 
A popular approach is to represent each neural architecture in the search
space as a directed acyclic graph (DAG), and then search over all DAGs
by encoding the adjacency matrix and list of operations
as a set of hyperparameters. 
Recent work has demonstrated that even small changes to the 
way each architecture is encoded can have a significant effect 
on the performance of NAS algorithms \citep{bananas, nasbench}.

In this work, we present the first formal study on the effect of architecture
encodings for NAS, including a theoretical grounding and an empirical study.
First we formally define architecture encodings and give a theoretical 
characterization on the scalability of the encodings we study.
Then we identify the main encoding-dependent subroutines which NAS algorithms 
employ, running experiments to show which encodings work best with each
subroutine for many popular algorithms. The experiments act as an
ablation study for prior work, disentangling the algorithmic and encoding-based
contributions, as well as a guideline for future work.
Our results demonstrate that NAS encodings are an important design decision which
can have a significant impact on overall performance.\footnote{
Our code is available at \url{https://github.com/naszilla/naszilla}.}
\end{abstract}

\section{Introduction} \label{sec:intro}

In the past few years, the field of neural architecture search (NAS)
has seen a steep rise in interest~\citep{nas-survey},
due to the promise of automatically designing specialized neural architectures
for any given problem.
Techniques for NAS span 
evolutionary search, Bayesian optimization, reinforcement learning,
gradient-based methods, and neural predictor methods.
Many NAS instantiations can be described by the optimization problem $\min_{a\in A}f(a)$,
where $A$ denotes a large set of neural architectures, and $f(a)$ denotes
the objective function of interest for $a$, which is usually a combination of validation
accuracy, latency, or number of parameters.
A popular approach is to describe each neural architecture $a$ as a labeled 
directed acyclic graph (DAG), where each node or edge represents an operation.

Due to the complexity of DAG structures
and the large size of the space, neural architecture search is typically 
a highly non-convex, challenging optimization problem.
A natural consideration when designing a NAS algorithm is therefore,
\emph{how should we encode the neural architectures to maximize performance?}
For example, NAS algorithms may involve manipulating or perturbing architectures,
or training a model to predict the accuracy of a given architecture; 
as a consequence, the representation of the DAG-based architectures may significantly change 
the outcome of these subroutines.
The majority of prior work has not explicitly considered this question,
opting to use a standard encoding consisting of the
adjacency matrix of the DAG along with a list of the operations.
Two recent papers have shown that even small changes to the architecture encoding can make a 
substantial difference in the final performance of the NAS 
algorithm~\cite{bananas, nasbench}.
It is not obvious how to formally define an encoding for NAS, as prior work defines
encodings in different ways, inadvertently using encodings which are incompatible
with other NAS algorithms.

In this work, we provide the first formal study on NAS encoding schemes,
including a theoretical grounding as well as a set of
experimental results.
We define an encoding as a multi-function from an architecture 
to a real-valued tensor.
We define a number of common encodings from prior
work, identifying adjacency matrix-based encodings~\citep{zoph2017neural, nasbench, wen2019neural} 
and path-based encodings~\citep{bananas, npenas, talbi2020optimization}
as two main paradigms.
Adjacency matrix approaches represent the architecture as a list of edges
and operations, while path-based approaches represent the architecture as
a set of paths from the input to the output.
We theoretically characterize the scalability of each encoding by
quantifying the information loss from truncation.
This characterization is particularly interesting for path-based encodings,
which we find to exhibit a phase change at $r^{k/n}$, where $r$ is the number of possible
operations, $n$ is the number of nodes, and $k$ is the expected number of edges.
In particular, we show that when the size of the path encoding is greater than $r^{2k/n}$,
barely any information is lost, but below $r^{k/(2n)}$, nearly all information is lost.
We empirically verify these findings.

Next, we identify three major encoding-dependent subroutines 
used in NAS algorithms:
\emph{sample random architecture}, \emph{perturb architecture}, and
\emph{train predictor model}.
We show which of the encodings perform best for each subroutine by testing
each encoding within each subroutine for many popular NAS algorithms.
Our experiments retroactively provide an ablation study for prior
work by disentangling the algorithmic contributions from the
encoding-based contributions.
We also test the ability of a neural predictor to generalize to new search spaces,
using a given encoding.
Finally, for encodings in which multiple architectures can map to the
same encoding, we evaluate the average standard deviation of accuracies for the equivalence class of architectures defined by each encoding.

Overall, our results show that NAS encodings are an important design decision
which must be taken into account not only at the algorithmic level, but at the
subroutine level, and which can have a significant impact on the final performance.
Based on our results, we lay out recommendations for which encodings to use within
each NAS subroutine.
Our experimental results follow the guidelines in the
recently released NAS research checklist~\cite{lindauer2019best}.
In  particular, we experiment on two popular NAS benchmark datasets,
and we release our code.

\paragraph{Our contributions.} We summarize our main contributions below.
\begin{itemize} [topsep=0pt, itemsep=2pt, parsep=0pt, leftmargin=5mm]
    \item We demonstrate that
    the choice of encoding is an important, nontrivial question
    that should be considered not only at the algorithmic level, 
    but at the subroutine level.
    \item We give a theoretical grounding for NAS encodings, including a 
    characterization of the scalability of each encoding.
    \item We give an experimental study of architecture encodings for 
    NAS algorithms, disentangling the algorithmic contributions from the 
    encoding-based contributions of prior work, and laying out recommendations
    for best encodings to use in different settings as guidance for future work.
\end{itemize}

\section{Related Work} \label{sec:related}

\paragraph{Neural architecture search.}
NAS has been studied for at least two decades and has received significant
attention in recent years
\citep{kitano1990designing, stanley2002evolving, zoph2017neural}. 
Some of the most popular techniques for NAS include evolutionary algorithms \citep{maziarz2018evolutionary}, 
reinforcement learning \citep{enas, efficientnets}, 
Bayesian optimization~\citep{nasbot}, 
gradient descent~\citep{darts},
neural predictors~\citep{wen2019neural},
and local search~\citep{white2020local}.
Recent papers have highlighted the need for fair and reproducible NAS comparisons~\citep{randomnas, nasbench, lindauer2019best}.
See the recent survey~\citep{nas-survey} for more information on NAS research.

\paragraph{Encoding schemes.}
Most prior NAS work has used the adjacency matrix encoding,~\citep{zoph2017neural, nasbench, darts},
which consists of the adjacency matrix together with a list of the 
operations on each node.
A continuous-valued variant has been shown to be more effective for some
NAS algorithms~\citep{nasbench}.
The path encoding is a popular choice for neural predictor 
methods~\citep{bananas, npenas, talbi2020optimization},
and it was shown that truncating the path encoding leads to a small information loss~\citep{bananas}.

Some prior work uses graph convolutional networks (GCN)
as a subroutine in NAS~\citep{shi2019multi, dvae}, 
which requires retraining for each new dataset or search space.
Other work has used intermediate encodings to reduce the complexity of the 
DAG~\citep{stanleyhypercube, irwin2019graph}, or added summary statistics
to the encoding of feedforward networks~\citep{sun2019evolving}.
To the best of our knowledge, no paper has conducted
a formal study of encodings involving more than two encodings.

\begin{figure*}
\centering %
\includegraphics[width=\textwidth]{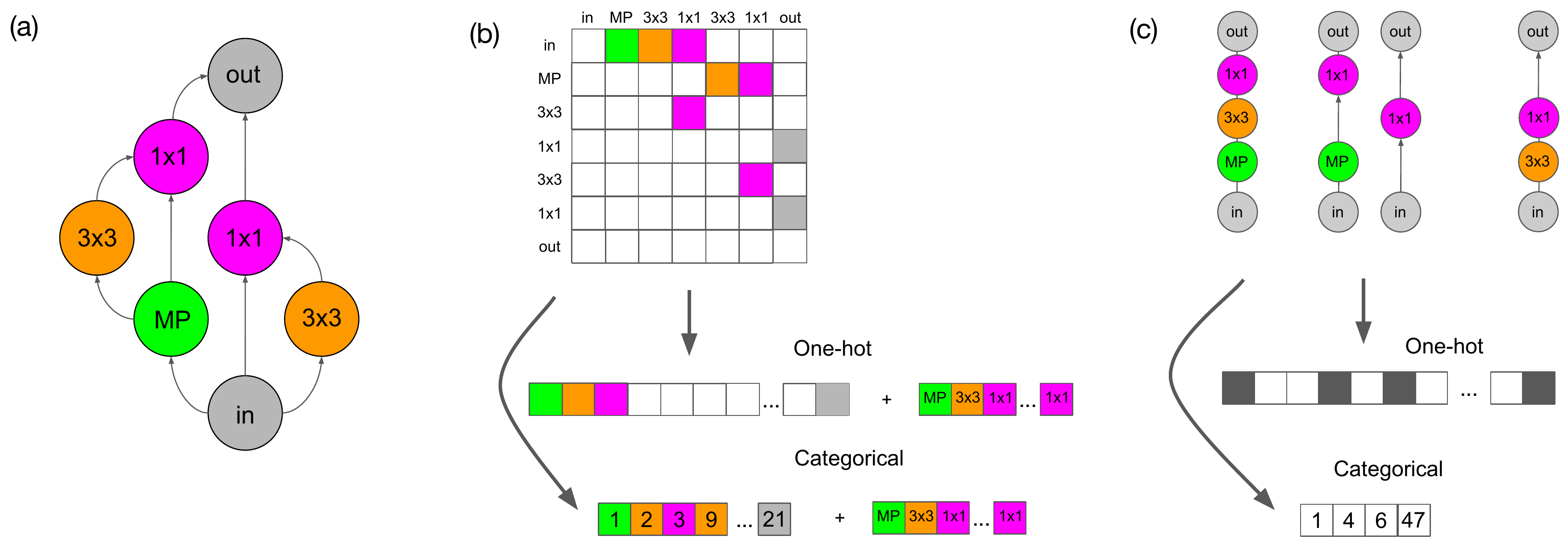}
\caption{
(a) An example neural architecture $a$. (b) An adjacency matrix representation of $a$, showing two encodings. (c) A path-based representation of $a$, showing two encodings.
}
\label{fig:encodings_viz}
\end{figure*}

\section{Encodings for NAS} \label{sec:prelim}

We denote a set of neural architectures $a$ by $A$ (called a search space),
and we define an objective function $\ell:A\rightarrow\R$, where $\ell(a)$
is typically a combination of the accuracy and the model complexity.
We define a neural architecture encoding as an integer $d$ and a
multifunction $e: A \rightarrow \mathbb{R}^d$
from a set of neural architectures $A$ to a $d$-dimensional Euclidean space
$\mathbb{R}^d$, and we define a NAS algorithm $\A$ as a procedure which takes as input a triple
$(A, \ell,e)$, and outputs an architecture $a$, with the goal that $\ell(a)$ is
as close to $\max_{a\in A} \ell(a)$ as possible.
Based on this definition, we consider an encoding $e$ to be a fixed transformation, independent of $\ell$. 
In particular, NAS components that use $\ell$ to learn a transformation of an input architecture such as graph convolutional networks (GCN) or variational autoencoders (VAE), 
are considered part of the NAS algorithm rather than the encoding.
This is consistent with prior definitions of encodings~\citep{talbi2020optimization, nasbench}.
However, we do still experimentally compare the fixed encodings with GCNs and VAEs 
in Section~\ref{sec:experiments}.

We define eight encodings split into two popular paradigms: adjacency matrix-based
and path-based encodings.
We assume that each architecture is represented by a DAG with 
at most $n$ nodes, at most $k$ edges, 
at most $P$ paths from input to output,
and $q$ choices of operations on each node.
We focus on the case where nodes represent operations, though our analysis extends similarly to formulations where edges represent operations.
Most of the following encodings have been defined in prior
work~\citep{nasbench, bananas, talbi2020optimization}, 
and we will see in the next section that each encoding
is useful for some part of the NAS pipeline.

\paragraph{Adjacency matrix encodings.}
We first consider a class of encodings that are based on representations
of the adjacency matrix. These are the most common types of encodings used in 
current NAS research.

\begin{itemize}
    \item The \emph{one-hot adjacency matrix encoding} is created by row-major 
    vectorizing (i.e.\ flattening) the architecture adjacency matrix and concatenating it with a list of node operation labels. Each position in the operation list is a 
    single integer-valued feature, where each operation is denoted by a different 
    integer. The total dimension is $n(n-1)/2 + n$. See Figure~\ref{fig:encodings_viz}.
    \item 
    In the \emph{categorical adjacency matrix encoding}, the adjacency matrix is
    first flattened (similar to the one-hot encoding described previously),
    and is then defined as a list of the indices each of which specifies one of the
    $n(n-1)/2$ possible edges in the adjacency  matrix.
    To ensure a fixed length encoding, each architecture is represented by $k$
    features, where $k$ is the maximum number of possible edges.
    We again concatenate this representation with a list of operations, yielding
    a total dimensionality of $k+n$. See Figure~\ref{fig:encodings_viz}.
    \item 
    Finally, the \emph{continuous adjacency matrix encoding} is similar to the
    one-hot encoding, but each of the features for each edge can take on any real
    value in $[0,1]$, rather than just $\{0,1\}.$
    We also add a feature representing the number of edges, $1\leq K\leq k$.
    The list of operations is encoded the same way as before.
    The architecture is created by choosing the $K$ edges with the largest
    continuous features. The dimension is $n(n-1)/2+n+1.$
\end{itemize}
The disadvantage of adjacency matrix-based encodings is that nodes are arbitrarily
assigned indices in the matrix, which means one architecture can have many
different representations (in other words, $e^{-1}$ is not one-to-one).
See Figure~\ref{fig:non_onto} (b).

\paragraph{Path-based encodings.}
Path-based encodings are representations of a neural architecture that are based on the set of paths from input to output that are present within the architecture DAG.
\begin{itemize}
    \item 
    The \emph{one-hot path encoding} is created by giving a binary feature to each
    possible path from the input node to the output node in the DAG (for example:
    \texttt{input}--\texttt{conv1x1}--\texttt{maxpool3x3}--\texttt{output}). 
    See Figure~\ref{fig:encodings_viz}.
    The total dimension is $\sum_{i=0}^n q^i=(q^{n+1}-1)/(q-1)$.
    The \emph{truncated one-hot path encoding}, simply truncates this encoding
    to only include paths of length $x$. The new dimension is $\sum_{i=0}^x q^i$.
    \item
    The \emph{categorical path encoding}, is defined as a list of indices each
    of which specifies one of the  $\sum_{i=0}^n q^i$ possible paths.
    See Figure~\ref{fig:encodings_viz}.
    \item
    The \emph{continuous path encoding} consists of a real-valued feature $[0,1]$ for
    each potential path, as well as a feature representing the number of paths.
    Just like the one-hot path encoding, the continuous path encoding can
    be truncated.
\end{itemize}

Path-based encodings have the advantage that nodes are not arbitrarily 
assigned indices, and also that isomorphisms are automatically mapped to the 
same encoding. Path-based encodings have the disadvantage that different
architectures can map to the same encoding ($e$ is not one-to-one).
See Figure~\ref{fig:non_onto} (c).

\begin{figure*}
\centering %
\includegraphics[width=\textwidth]{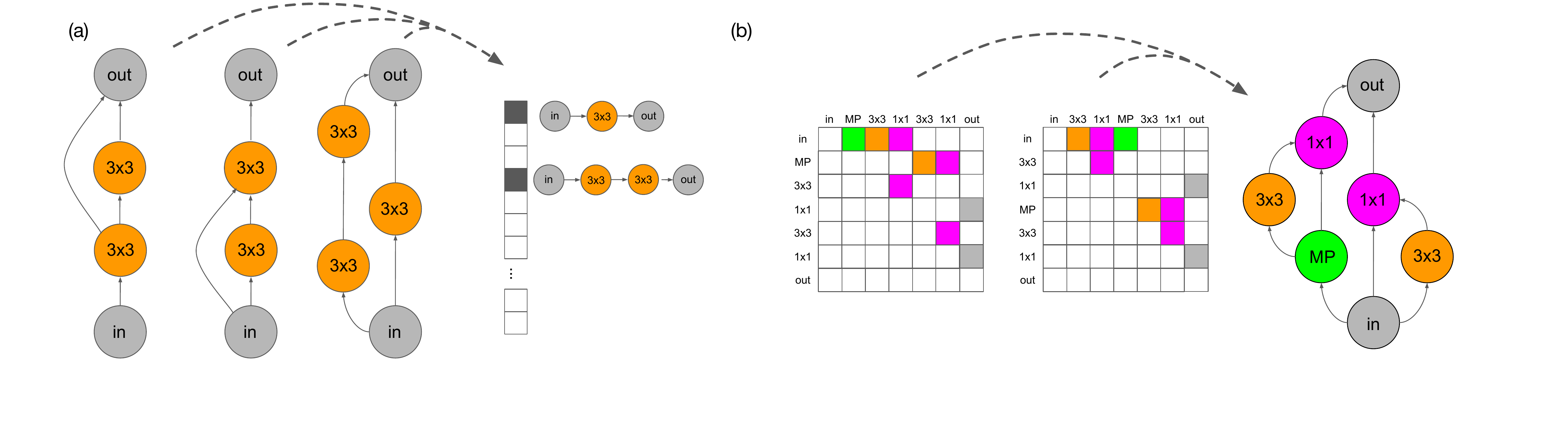}
\caption{
(a) An example of three architectures that map to the same path encoding.
(b) An example of two adjacency matrices that map to the same architecture.
}
\label{fig:non_onto}
\end{figure*}


\subsection{The scalability of encodings}
In this section, we discuss the scalability of the NAS encodings with
respect to architecture size.
We focus on the one-hot variants of the encodings,
but our analysis extends to all encodings.
We show that the path encoding can be truncated significantly
while maintaining its performance, while the adjacency matrix cannot be truncated at all
without sacrificing performance,
and we back up our theoretical results with experimental observations 
in the next section.
In prior work, the one-hot path encoding has been shown to be effective on smaller
benchmark NAS datasets~\citep{npenas, bananas}, but it has been
questioned whether its exponential $\Theta(q^n)$ length
allows it to perform well on very large search spaces~\citep{talbi2020optimization}.
However, a counter-arguement is as follows.
The vast majority of features correspond to single line
paths using the full set of nodes. This type of architecture is not common during
NAS algorithms, nor is it likely to be effective in real applications.
Prior work has made the first steps in showing that truncating the path
encoding does not harm the performance of NAS algorithms~\citep{bananas}.

Consider the popular \emph{sample random architecture} method:
given $n,~r,$ and $k\leq\frac{n(n-1)}{2},$
\emph{(1)} choose one of $r$ operations for each node from $1$ to $n$;
\emph{(2)} for all $i<j$, add an edge from node $i$ to node $j$
with probability $\frac{2k}{n(n-1)}$;
\emph{(3)} if there is no path from node 1 to node $n$, 
\texttt{goto}\texttt{(1)}.
Given a random graph $G_{n,k,r}$ outputted by this method,
let $a_{n,k,\ell}$ denote the expected number of paths
from node 1 to node $n$ of length $\ell$ in $G_{n,k,r}$.
We define
\begin{equation*}
    b(k, x)=\frac{\sum_{\ell=1}^x a_{n,k,\ell}}{\sum_{\ell=1}^n a_{n,k,\ell}}.
\end{equation*}
Given $n<k<n(n-1)/2$ and $0<x<n$, $b(k,x)$ represents the expected fraction of 
paths of length at most $x$ in $G_{n,k,r}$.
Say that we truncate the path encoding to only include paths of length at most $x$.
If $b(k,x)$ is very close to one, then the truncation will result in very little
information loss because nearly all paths in a randomly drawn architecture are
length at most $x$ with high probability.
However, if $b(k,x)$ is bounded away from 1 by some constant, there may
not be enough information in the truncated path encoding to effectively run a NAS
algorithm.

Prior work has shown that $b(k,x)>1-1/n^2$ when $k<n+O(1)$ and 
$x>\log n$~\citep{bananas}.
However, no bounds for $b(k,x)$ are known when $k$ is larger than a constant
added to $n$.
Now we present our main result for the path encoding, 
which gives a full characterization of
$b(k,x)$ up to constant factors. Interestingly, we show that $b(k,x)$
exhibits a phase transition at $x=k/n.$
What this means is, for the purposes of NAS, truncating the path encoding
to length $r^{k/n}$ contains almost exactly the same information as the full path
encoding, and it cannot be truncated any smaller.
In particular, if $k\leq n\log n,$
the truncated path encoding can be length $n$, 
which is smaller than the one-hot adjacency matrix encoding.
We give the details of the proofs from this section in
Appendix~\ref{app:prelim}.

\begin{restatable}{rethm}{characterization}\label{thm:characterization}
Given $10\leq n\leq k \leq \frac{n(n-1)}{2}$, and $c>3$,
for $x> 2ec\cdot\frac{k}{n}$, $b(k,x)>1-c^{-x+1}$,
and for $x<\frac{1}{2ec}\cdot\frac{k}{n},~b(k,x)<-2^{\frac{k}{2n}}.$
\end{restatable}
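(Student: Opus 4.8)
\section*{Proof proposal}

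The plan is to strip away the conditioning in step~(3), rewrite $b(k,x)$ exactly as the tail of a binomial distribution, and then read off the phase transition from Chernoff bounds on each side of the mean. Write $N_\ell$ for the (unconditional) number of length-$\ell$ paths from node $1$ to node $n$ and let $E$ be the event that at least one such path exists, so that $a_{n,k,\ell}=\mathbb{E}[N_\ell\mid E]=\mathbb{E}[N_\ell\,\mathbf{1}_E]/\Pr[E]$. The factor $\Pr[E]$ is common to the numerator and denominator of $b(k,x)$ and cancels; moreover, when $E$ fails every $N_\ell$ is $0$, so $N_\ell\,\mathbf{1}_E=N_\ell$ pointwise. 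Hence $b(k,x)=\big(\sum_{\ell\le x}\mathbb{E}[N_\ell]\big)/\big(\sum_{\ell\le n}\mathbb{E}[N_\ell]\big)$ with \emph{unconditional} expectations, and the operation labels from step~(1) are irrelevant to path counts. A length-$\ell$ path is an increasing node sequence $1=v_0<v_1<\dots<v_\ell=n$, determined by its $\ell-1$ interior nodes drawn from $\{2,\dots,n-1\}$ and present with probability $p^\ell$ where $p=\tfrac{2k}{n(n-1)}$, so $\mathbb{E}[N_\ell]=\binom{n-2}{\ell-1}p^\ell$. Substituting $m=\ell-1$ and applying the binomial theorem to the denominator gives $\sum_{\ell\le n}\mathbb{E}[N_\ell]=p(1+p)^{n-2}$ and the clean identity
\begin{equation*}
b(k,x)=\frac{\sum_{m=0}^{x-1}\binom{n-2}{m}p^m}{(1+p)^{n-2}}=\Pr[Z\le x-1],\qquad Z\sim\mathrm{Binomial}\!\left(n-2,\tfrac{p}{1+p}\right).
\end{equation*}
This reformulation is the crux; everything after it is a tail estimate.

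I would then record that the mean $\mu=\mathbb{E}[Z]=(n-2)\tfrac{p}{1+p}$ satisfies $\tfrac 89\cdot\tfrac kn\le\mu\le\tfrac{2k}{n}$ for $n\ge 10$, using $p\le 1$ (which holds since $k\le n(n-1)/2$, giving $\tfrac{p}{1+p}\ge\tfrac p2$) together with $\tfrac{n-2}{n-1}\ge\tfrac 89$. Thus the typical path length is $\Theta(k/n)$, exactly where the phase transition is claimed. For the upper-tail inequality, $x>2ec\cdot\tfrac kn\ge ec\mu>\mu$, so the multiplicative Chernoff bound $\Pr[Z\ge x]\le(e\mu/x)^x$ applies; since $\tfrac{e\mu}{x}\le\tfrac{e\cdot 2k/n}{2ec\cdot k/n}=\tfrac1c$, we obtain $1-b(k,x)=\Pr[Z\ge x]\le c^{-x}\le c^{-x+1}$. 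For the lower tail I would use $\Pr[Z\le x-1]\le\Pr[Z\le x]\le e^{-\mu}(e\mu/x)^x$, valid because $x<\tfrac{1}{2ec}\cdot\tfrac kn<\mu$, and bound the exponent $-\mu+x\ln(e\mu/x)$ by $-\tfrac{k}{2n}\ln 2$; interpreting the misprinted right-hand side of the theorem as $2^{-k/(2n)}$, this yields $b(k,x)<2^{-k/(2n)}$.

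The main obstacle is the constant bookkeeping in the lower-tail bound. Since $x\mapsto x\ln(e\mu/x)$ is increasing on $(0,\mu)$, it is maximized at the endpoint $x=\tfrac{1}{2ec}\cdot\tfrac kn$, where $e\mu/x\le 4e^2c$; combined with $\mu\ge\tfrac 89\cdot\tfrac kn$ this makes the exponent at most $\tfrac kn\big(-\tfrac 89+\tfrac{\ln(4e^2c)}{2ec}\big)$. The remaining task is to verify $\tfrac{\ln(4e^2c)}{2ec}\le\tfrac 89-\tfrac12\ln 2$ for every $c>3$, which holds because the left side is decreasing in $c$ (its value at $c=3$ is below $0.28$ while the right side exceeds $0.54$), so the exponent stays below $-\tfrac{k}{2n}\ln 2$. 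This constant-chasing, together with confirming that the thresholds $2ec$ and $\tfrac{1}{2ec}$ are exactly what the two Chernoff estimates force, is the only delicate step; the conceptual content lives entirely in the reduction to a binomial CDF, after which the phase transition is simply the statement that a binomial concentrates about its mean $\Theta(k/n)$.
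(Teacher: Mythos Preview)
Your argument is correct and takes a genuinely different route from the paper. Both proofs begin the same way---cancelling the conditioning on step~(3) and writing $a'_{n,k,\ell}=\binom{n-2}{\ell-1}p^\ell$ with $p=\tfrac{2k}{n(n-1)}$---but diverge after that. The paper never sums the series in closed form; instead it sandwiches each term via $\bigl(\tfrac{n}{\ell}\bigr)^\ell\le\binom{n}{\ell}\le\bigl(\tfrac{en}{\ell}\bigr)^\ell$, then for the first half lower-bounds the numerator by the single $\ell=1$ term and upper-bounds the tail $\sum_{\ell>x}$ by a geometric series with ratio $1/c$, while for the second half it upper-bounds each of the first $x$ terms by a common value and lower-bounds the full sum by the single term at $\ell=\tfrac{k(n-2)}{n(n-1)}+1$ (which evaluates to $p\cdot 2^{k(n-2)/(n(n-1))}$). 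Your observation that $\sum_m\binom{n-2}{m}p^m=(1+p)^{n-2}$, hence $b(k,x)=\Pr[Z\le x-1]$ for $Z\sim\mathrm{Binomial}(n-2,\tfrac{p}{1+p})$, replaces all of this term-by-term work with two Chernoff tails. The payoff is conceptual: the phase transition at $k/n$ becomes literally the statement that a binomial concentrates about its mean $\mu=\Theta(k/n)$, and the particular thresholds $2ec$ and $\tfrac{1}{2ec}$ fall out of the familiar $(e\mu/x)^x$ form. The paper's approach is slightly more self-contained (it uses only the elementary coefficient bounds, no Chernoff), but the final constant-chasing in the lower tail is essentially the same calculation in both proofs. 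Your handling of the misprinted $-2^{k/(2n)}$ as $2^{-k/(2n)}$ matches the paper's intended bound.
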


\begin{proof}[\textbf{Proof sketch.}]
Let $G'_{n,k,r}$ denote a random graph after step \emph{(2)} of 
\emph{sample random architecture}. Then $G'_{n,k,r}$ may not contain a
path from node 1 to node $n$.
Let $a'_{n,k,\ell}$ denote the expected number of paths of length $\ell$ in
$G'_{n,k,r}$.
Say that a graph is \emph{valid} if it contains a path from node 1 to node $n$.
Then
\begin{equation*}
 a'_{n,k,\ell}=0\cdot (1-P(G'_{n,k,r}\text{ is valid}))+a_{n,k,\ell}\cdot P(G'_{n,k,r}\text{ is valid}),   
\end{equation*}
so $a_{n,k,\ell}=a'_{n,k,\ell}/P(G'_{n,k,r}\text{ is valid}).$
Then
\begin{equation*}
    b(k, x)=\frac{\sum_{\ell=1}^x a_{n,k,\ell}}{\sum_{\ell=1}^n a_{n,k,\ell}}
    =\frac{\sum_{\ell=1}^x a'_{n,k,\ell}/P(G'_{n,k,r}\text{ is valid})}{\sum_{\ell=1}^n a'_{n,k,\ell}/P(G'_{n,k,r}\text{ is valid})}
    =\frac{\sum_{\ell=1}^x a'_{n,k,\ell}}{\sum_{\ell=1}^n a'_{n,k,\ell}}.
\end{equation*}

\begin{equation*}
\text{Now we claim }~
\frac{2k}{n(n-1)}\left(\frac{2k(n-2)}{(\ell-1)n(n-1)}\right)^{\ell-1}
\leq a_{n,k,\ell}\leq
\frac{2k}{n(n-1)}\left(\frac{2ek(n-2)}{(\ell-1)n(n-1)}\right)^{\ell-1}.
\end{equation*}
This is because on a path from node 1 to $n$ of length $\ell$,
there are $\binom{n-2}{\ell-1}$ choices of intermediate
nodes from 1 to $n$. Once the nodes are chosen, we need all $\ell$
edges between the nodes to exist, and each edge exists independently 
with probability $\frac{2}{n(n-1)}\cdot k.$
Then we use the well-known binomial inequalities
$\left(\frac{n}{\ell}\right)^\ell \leq \binom{n}{\ell}
\leq \left(\frac{en}{\ell}\right)^\ell$
to finish the claim.

To prove the first part of Theorem~\ref{thm:characterization},
given $x> 2ec\cdot\frac{k}{n},$
we must upper bound $\sum_{\ell={x+1}}^n a'_{n,k,\ell}$
and lower bound $\sum_{\ell=1}^x a'_{n,k,\ell}$.
To lower bound $\sum_{\ell=1}^x a'_{n,k,\ell}$, 
we use $x> 2ec\cdot\frac{k}{n}$ with the claim:
\begin{align*}
\sum_{\ell=x+1}^n a_{n,k,\ell}    
\leq \sum_{\ell=x+1}^n \frac{2k}{n(n-1)}
\left(\frac{2ek(n-2)}{(\ell-1)n(n-1)}\right)^{\ell-1}
&\leq \frac{2k}{n(n-1)}\sum_{\ell=x+1}^n \left(\frac{1}{c}\right)^{\ell-1}\\
&\leq\left(\frac{2k}{n(n-1)}\right)
\left(\frac{1}{c}\right)^{x-1}
\end{align*}

We also have $a_{n,k,1}=\frac{2k}{n(n-1)}$ because there is just one path
of length 1: the edge from the input node to the output node.
Therefore, we have
\begin{equation*}
b(k,x)=\frac{\sum_{\ell=1}^x a_{n,k,\ell}}{\sum_{\ell=1}^n a_{n,k,\ell}}\\
\geq\frac{a_{n,k,1}}{a_{n,k,1}+\sum_{\ell=x+1}^n a_{n,k,\ell}}
\geq \frac{\frac{2k}{n(n-1)}}{\frac{2k}{n(n-1)}
+\left(\frac{2k}{n(n-1)}\right)\left(\frac{1}{c}\right)^{x-1}}
\geq 1-c^{-x+1}.
\end{equation*}

The proof of the second part of Theorem~\ref{thm:characterization} 
uses similar techniques.
%
\end{proof}

In Figure~\ref{fig:equiv_class}, we plot $b(k,x)$ for NASBench-101,
which supports Theorem~\ref{thm:characterization}.
Next, we may ask whether the one-hot adjacency matrix encoding can be truncated.
However, even removing one bit from the adjacency matrix encoding can be very
costly, because each single edge makes the difference between a path from the input
node to the output node vs.\ no path from the input node to the output node.
In the next theorem, we show that the probability of a random graph containing
any individual edge is at least $2k/(n(n-1))$. Therefore,
truncating the adjacency matrix encoding even by a single bit results in
significant information loss.
In the following theorem, let $E_{n,k,r}$ denote the edge set of $G_{n,k,r}$.
Given $1\leq z\leq n(n-1)/2$, we slightly abuse notation
by writing $z\in E_{n,k,r}$ if the edge with index $z$ in the adjacency
matrix is in $E_{n,k,r}$.
\begin{restatable}{rethm}{adjacency}\label{thm:adjacency}
Given $n\leq k \leq \frac{n(n-1)}{2}$
and $1\leq z\leq n(n-1)/2$, we have $P(z\in E_{n,k,r})>\frac{2k}{n(n-1)}.$
\end{restatable}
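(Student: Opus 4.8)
The plan is to exploit the fact that $G_{n,k,r}$ is exactly the post-step-(2) graph $G'_{n,k,r}$ conditioned on being \emph{valid}, and that conditioning on a monotone increasing event can only raise each edge's inclusion probability (the positive-correlation, or FKG/Harris, phenomenon). Write $p=\frac{2k}{n(n-1)}$ for the per-edge probability in step (2), let $V$ be the event that $G'_{n,k,r}$ is valid, and let $Z$ be the event $z\in E'$ that edge $z$ is present in $G'_{n,k,r}$. Since edges of $G'_{n,k,r}$ are included independently with probability $p$, we have $P(Z)=p$, and by definition of rejection sampling $P(z\in E_{n,k,r})=P(Z\mid V)$. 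By Bayes' rule $P(Z\mid V)=\frac{P(V\mid Z)}{P(V)}\,p$, so it suffices to prove the strict inequality $P(V\mid Z)>P(V)$; note $P(V)>0$ since the edge $(1,n)$ alone already makes the graph valid with probability at least $p>0$.

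First I would establish the non-strict inequality by conditioning on the remaining edges. Let $\omega_{-z}$ range over configurations of all edges other than $z$. Because $V$ is increasing in the edge set (adding an edge never destroys an existing $1$-to-$n$ path), the set $B$ of configurations $\omega_{-z}$ for which $V$ holds with $z$ \emph{absent} is contained in the set $A$ of configurations for which $V$ holds with $z$ \emph{present}. Averaging over $z$ gives $P(V)=p\,P(A)+(1-p)\,P(B)$ while $P(V\mid Z)=P(A)$, whence
\begin{equation*}
P(V\mid Z)-P(V)=(1-p)\bigl(P(A)-P(B)\bigr)=(1-p)\,P(A\setminus B).
\end{equation*}
Thus, as long as $p<1$, the strict inequality reduces to showing $P(A\setminus B)>0$: there must be a positive-probability configuration of the other edges in which $z$ is \emph{critical}, meaning its presence is necessary for the existence of a $1$-to-$n$ path.

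The main work, and the only genuinely combinatorial step, is exhibiting such a critical configuration. Writing $z=(i,j)$ with $i<j$, I would take the configuration whose only present non-$z$ edges are $(1,i)$ (omitted if $i=1$) and $(j,n)$ (omitted if $j=n$), with every other edge absent. With $z$ present this yields the path $1\to i\to j\to n$ (collapsing appropriately when $i=1$ or $j=n$), so $V$ holds; with $z$ absent, node $i$ has no outgoing edge and node $j$ has no incoming edge, so no $1$-to-$n$ path survives and $V$ fails. This configuration occurs with probability at least $p^{m}(1-p)^{n(n-1)/2-1-m}>0$ for the relevant count $m\in\{0,1,2\}$ of present edges, so $P(A\setminus B)>0$ and hence $P(V\mid Z)>P(V)$. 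Combining with the Bayes identity yields $P(z\in E_{n,k,r})=P(Z\mid V)>p=\frac{2k}{n(n-1)}$, as claimed. The strictness relies on $p<1$; the boundary $k=n(n-1)/2$ is degenerate, since then every edge is present with probability one, the graph is always valid, and $P(z\in E_{n,k,r})=1=p$.
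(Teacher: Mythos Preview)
Your proof is correct and follows essentially the same route as the paper: reduce $P(Z\mid V)>p$ via Bayes' rule to $P(V\mid Z)>P(V)$, and then argue this from monotonicity of the validity event (the paper phrases the same idea as a bijection $\phi$ that removes edge $z$, noting ``$G$ is valid if $\phi(G)$ is valid but the reverse does not hold''). Your version is more explicit about strictness---you actually construct a critical configuration rather than merely asserting the reverse fails---and you correctly flag the degenerate endpoint $k=n(n-1)/2$, where $p=1$ and the stated strict inequality collapses to equality.
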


\begin{proof}[\textbf{Proof}]
Recall that \emph{sample random architecture} adds each edge with probability
$2k/(n(n-1))$ and rejects in step \emph{(3)} 
if there is no path from the input to the output.
Define $G'_{n,k,r}$ and \emph{valid} as in the proof of 
Theorem~\ref{thm:characterization} and let $E'_{n,k,r}$ denote the edge set of $G'_{n,k,r}$.
Then
\begin{align*}
   \frac{P(G'_{n,k,r}\text{ is valid}\mid z\in E'_{n,k,r})}
{P(G'_{n,k,r}\text{ is valid})} =
\frac{P(z\in E'_{n,k,r} \mid G'_{n,k,r}\text{ is valid})}{P(z\in E'_{n,k,r})} > 1, 
\end{align*}
where the first equality comes from Bayes' theorem, and the inequality follows
because there is a natural bijection $\phi$ from graphs with $z$ to graphs without $z$
given by removing $z$, where $G$ is valid if $\phi(G)$ is valid but the reverse does not hold. 
Therefore,
\begin{align*}
P(z\in E_{n,k,r})&=P(z\in E'_{n,k,r}\mid G'_{n,k,r}\text{ is valid})
=\frac{P(G'_{n,k,r}\text{ is valid}\mid z\in E'_{n,k,r})P(z\in E'_{n,k,r})}
{P(G'_{n,k,r}\text{ is valid})}\\
& > P(z\in E'_{n,k,r}) = \frac{2k}{n(n-1)}. \qedhere
\end{align*}
\end{proof}

Our theoretical results show that the path encoding can be heavily truncated,
while the adjacency matrix cannot be truncated. 
In the next section, we verify this experimentally (Figure~\ref{fig:equiv_class}).

\section{Experiments}
\label{sec:experiments}

In this section, we present our experimental results.
All of our experiments follow the Best Practices for NAS 
checklist~\citep{lindauer2019best}. We discuss our adherence to these
practices in 
Appendix~\ref{app:experiments}.
In particular, we release our code at 
\url{https://github.com/naszilla/naszilla}.
We run experiments on two search spaces which we describe below.

The NASBench-101 dataset~\citep{nasbench} consists of approximately 423,000 
neural architectures pretrained on CIFAR-10.
The search space is a cell consisting of 7 nodes.
The first node is the input, and the
last node is the output. The middle five nodes can take one of
three choices of operations, and there can be at most 9 edges between
the 7 nodes.
The NASBench-201 dataset~\citep{nasbench201} consists of
$15625$ neural architectures separately trained on each of
CIFAR-10, CIFAR-100, and ImageNet16-120.
The search space consists of a cell which is a complete directed 
acyclic graph with 4 nodes.
Each edge takes an operation, and there are five possible operations.
%

We split up our first set of experiments based on the three encoding-dependent subroutines:
\emph{sample random architecture, perturb architecture}, and 
\emph{train predictor model}.
These three subroutines are the only encoding-dependent building
blocks necessary for many NAS algorithms.

\paragraph{Sample random architecture.}
Most NAS algorithms use a subroutine to draw an architecture randomly 
from the search space. Although this operation is more generally parameterized
by a distribution over the search space, it is often instantiated with the choice
of architecture encoding.
Given an encoding, we define a subroutine by sampling each feature
uniformly at random.
We also compare to sampling each \emph{architecture} 
uniformly at random 
from the search space (which does not correspond to any encoding).
Note that sampling architectures uniformly at random can be very computationally
intensive. It is much easier to sample \emph{features} uniformly at random.

\paragraph{Perturb architecture.}
Another common subroutine in NAS algorithms is to make a small change to a
given architecture.
The type of modification depends on the encoding. 
For example, a perturbation might be to
change an operation, add or remove an edge, or add or remove a path.
Given an encoding and a mutation factor $m$, 
we define a perturbation subroutine by resampling each
feature of the encoding uniformly at random with a fixed probability,
so that $m$ features are modified on average.

\paragraph{Train predictor model.}
Many families of NAS algorithms use a subroutine which learns a model based on
previously queried architectures. For example, this can take the form of
a Gaussian process within Bayesian optimization (BO), or, more recently,
a neural predictor model~\citep{shi2019multi, wen2019neural, bananas}.
In the case of a Gaussian process model, the algorithm uses a distance
metric defined on pairs of neural architectures,
which is typically chosen as the edit distance between
architecture encodings~\citep{nasbot, auto-keras}.
In the case of a neural predictor, the encodings of the queried architectures
are used as training data, and the goal is typically to predict the accuracy
of unseen architectures.

\paragraph{Experimental setup and results.}
We run multiple experiments for each encoding-dependent subroutine listed
above. Many NAS algorithms use more than one subroutine, so in each experiment,
we fix the encodings for all subroutines except for the one we are testing.
For each NAS subroutine, we experiment on algorithms that depend on
the subroutine.
In particular, for \emph{random sampling}, we run experiments on 
the Random Search algorithm. 
For \emph{perturb architecture}, we run experiments on 
regularized evolution~\citep{real2019regularized} 
and local search~\citep{white2020local}.
For \emph{train predictor model}, we run experiments on BO, testing five encodings
that define unique distance functions, as well as NASBOT~\citep{nasbot}
(which does not correspond to an encoding).
We also train a neural predictor model using seven different encodings, as
well as GCN~\citep{wen2019neural} and VAE~\citep{dvae}.
Since this runs in every iteration of a NAS 
algorithm~\cite{shi2019multi, bananas, wen2019neural},
we plot the mean absolute error on the test set for different sizes
of training data.
Finally, we run experiments on BANANAS~\citep{bananas},
varying all three subroutines at once.
We directly used the open source code for each algorithm, except we changed
the hyperparameters based on the encoding, described below. 
Details on the implementations for each algorithm are discussed in
Appendix~\ref{app:experiments}.

Existing NAS algorithms may have hyperparameters that are optimized for a
specific encoding, therefore, we perform hyperparameter tuning for each
encoding. We just need to be careful that we do not perform hyperparameter 
tuning for specific \emph{datasets} 
(in accordance with NAS best practices~\citep{lindauer2019best}). 
Therefore, we perform the hyperparameter search on CIFAR-100 from NAS-Bench-201, 
and apply the results on NAS-Bench-101. We defined a search region for each 
hyperparameter of each algorithm, and then for each encoding, we ran 50 
iterations of random search on the full hyperparameter space. 
We chose the configuration that minimizes the validation loss of the NAS 
algorithm after 200 queries. 


%

\begin{figure*}
\centering %
\includegraphics[width=0.33\textwidth]{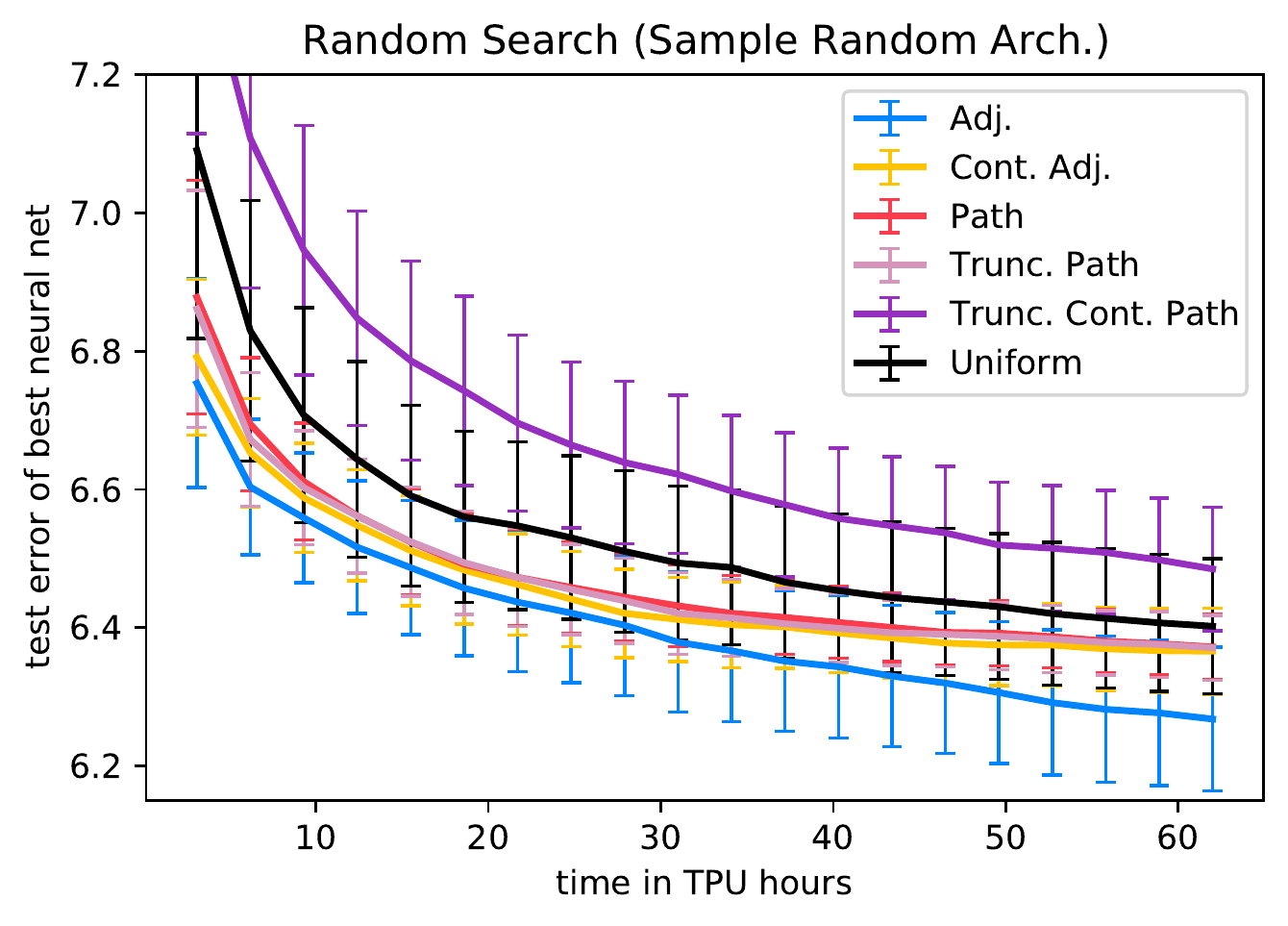}
\hspace{-3pt}
\includegraphics[width=0.33\textwidth]{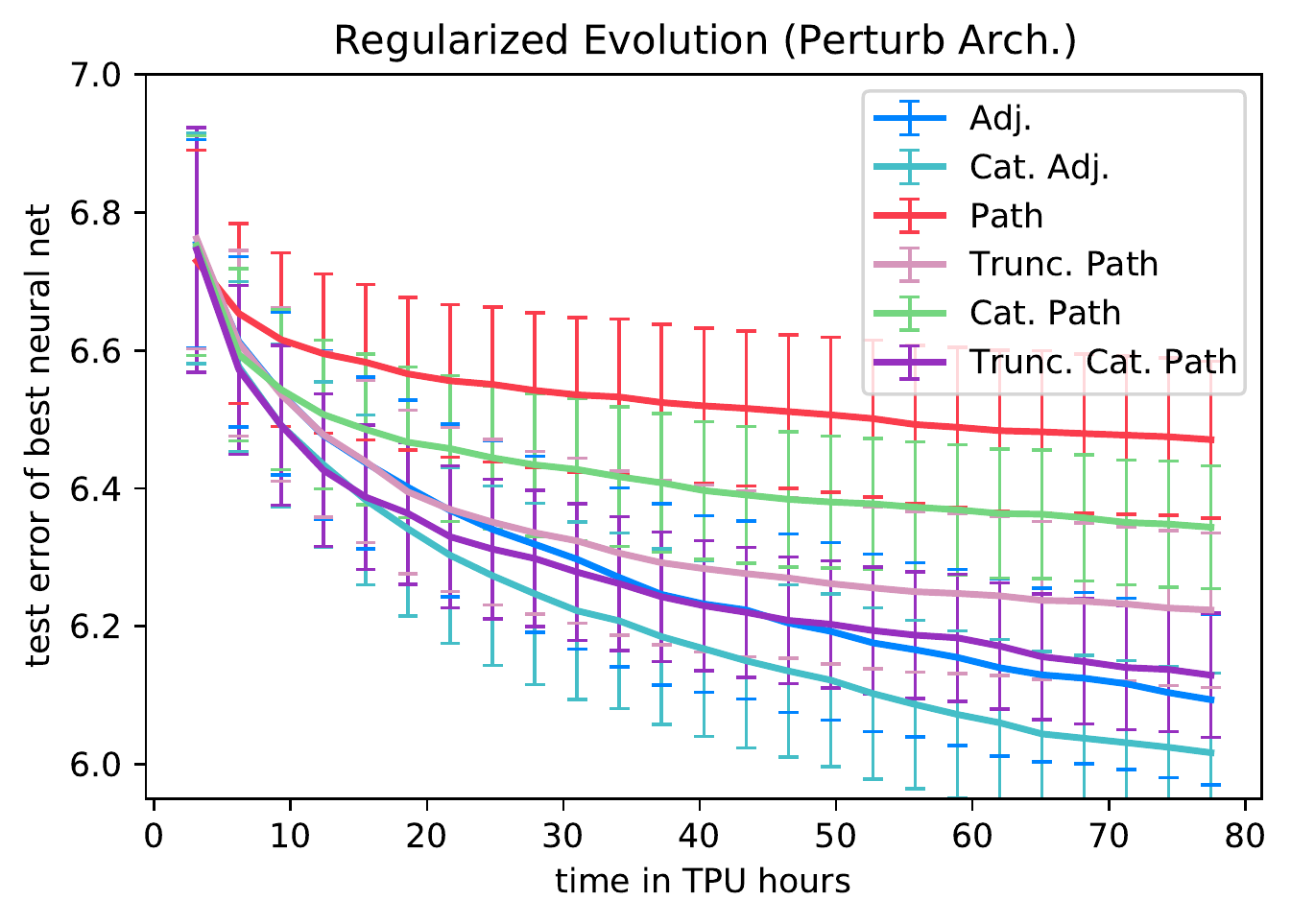}
\hspace{-3pt}
\includegraphics[width=0.32\textwidth]{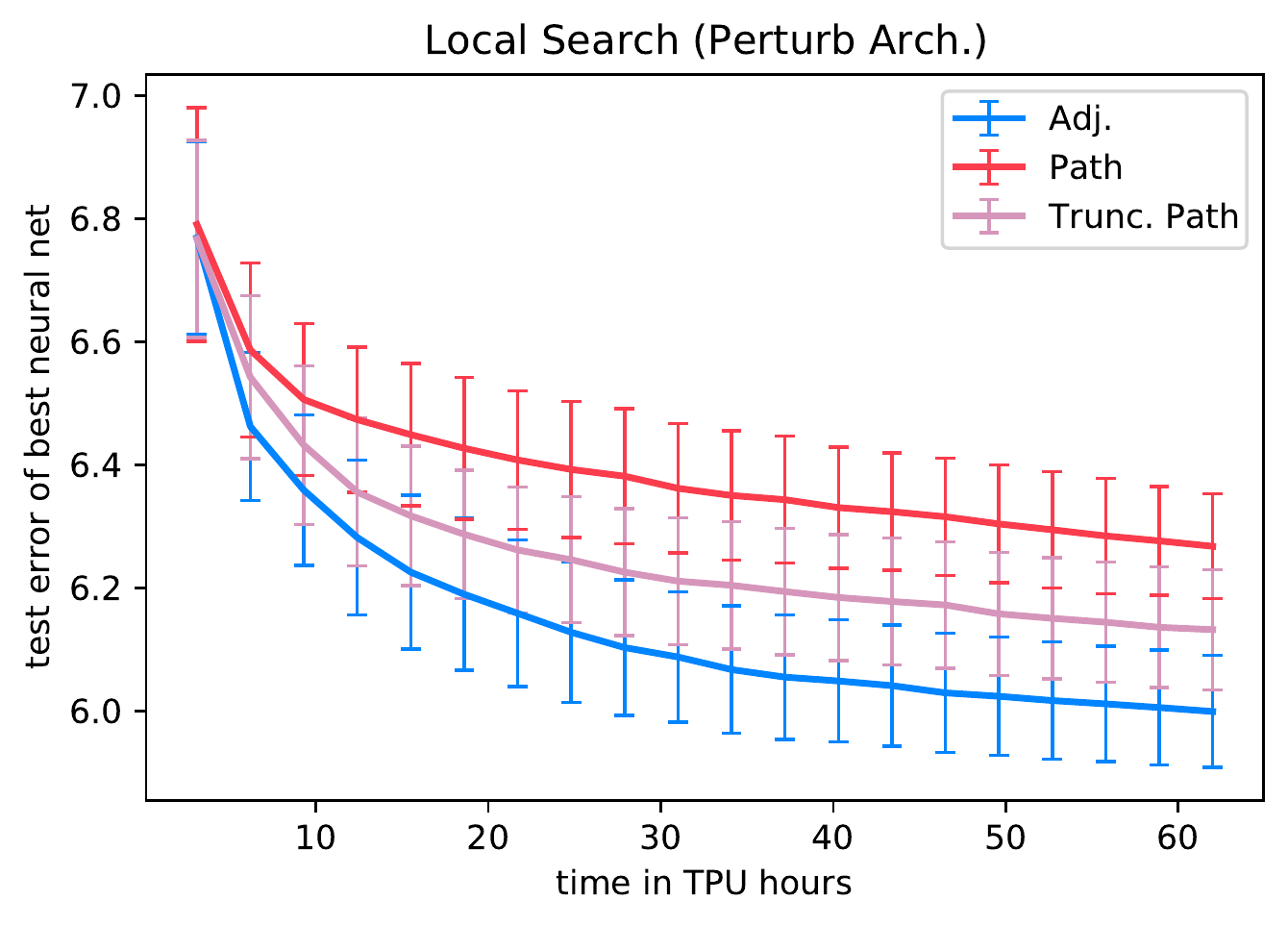}
\includegraphics[width=0.33\textwidth]{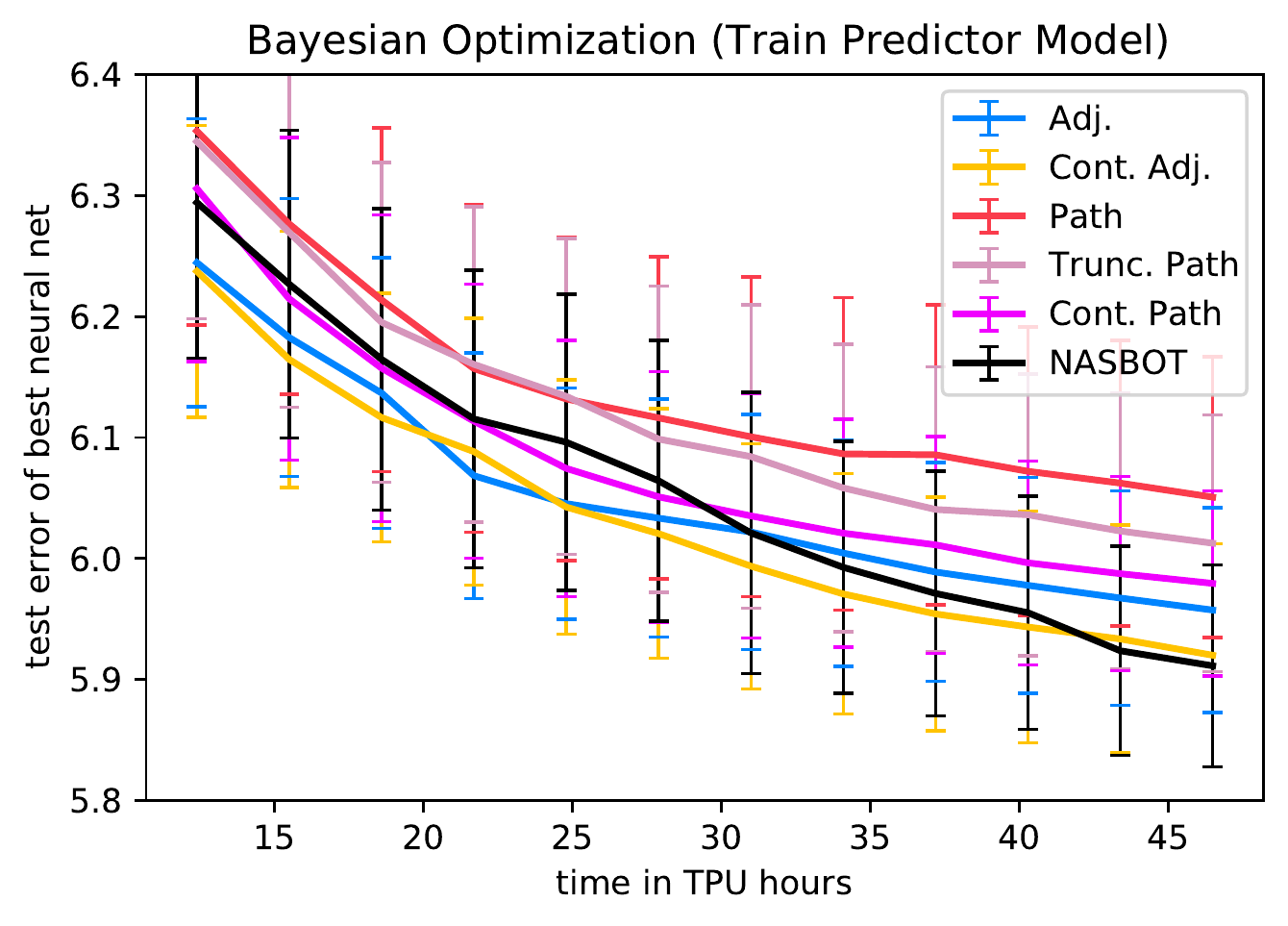}
\hspace{-3pt}
\includegraphics[width=0.33\textwidth]{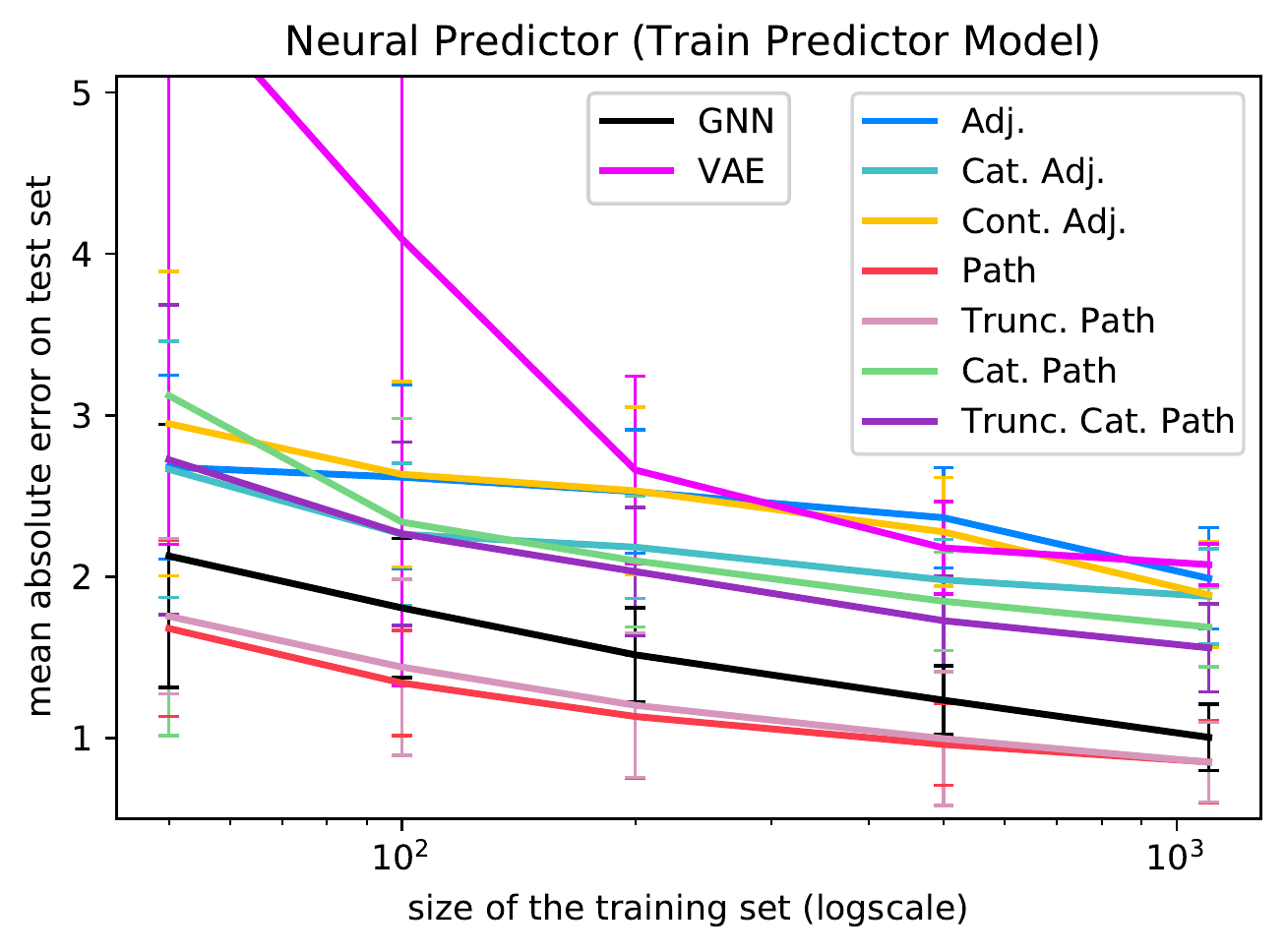}
\hspace{-3pt}
\includegraphics[width=0.32\textwidth]{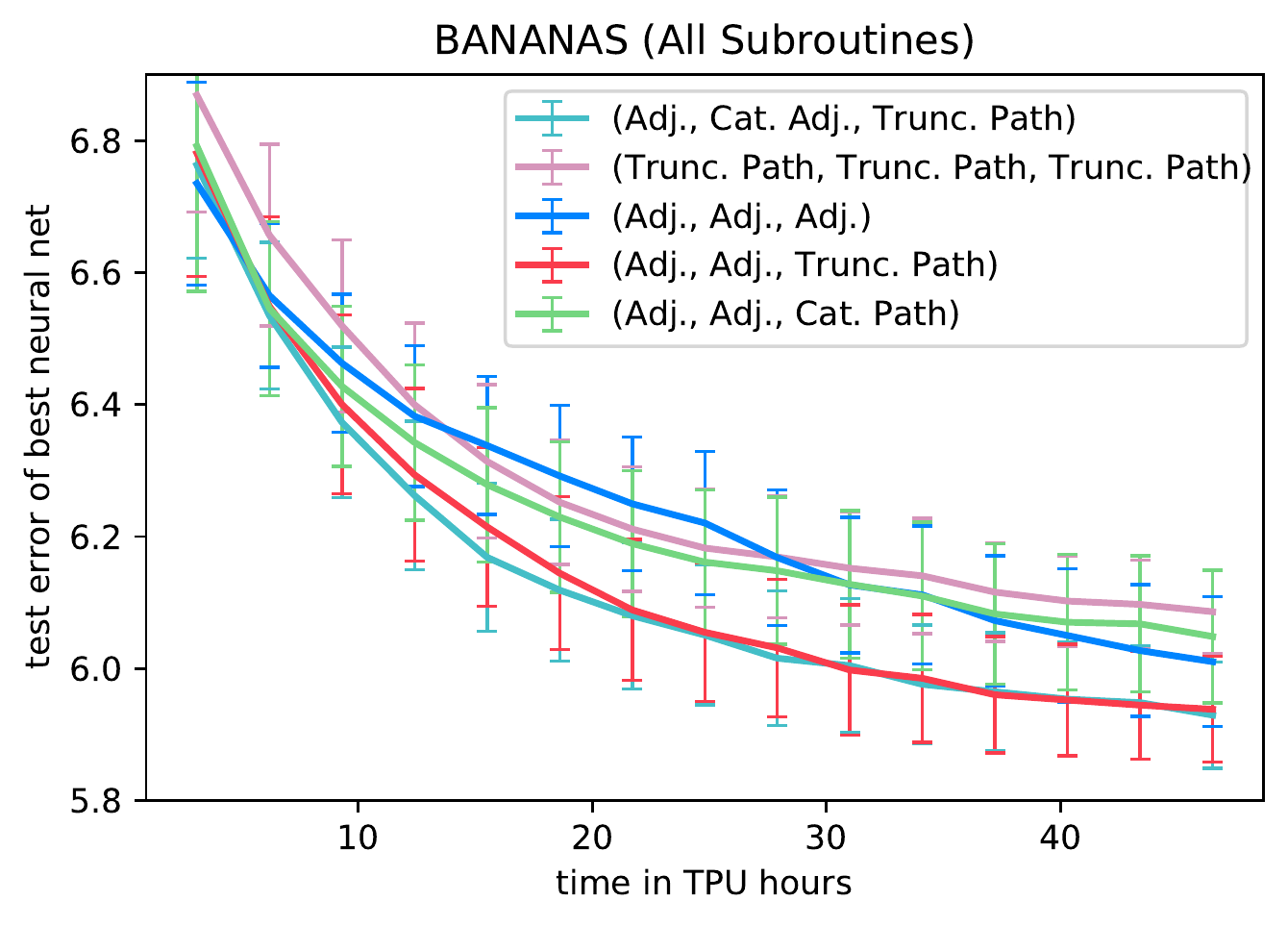}
\caption{
Experiments on NASBench-101 with different encodings, keeping all but one subroutine
fixed: \emph{random sampling} (top left), \emph{perturb architecture} (top middle, top right), \emph{train predictor model} (bottom left, bottom middle), or varying all three subroutines (bottom right).
}
\label{fig:encoding_results}
\end{figure*}

In each experiment, we report the test error of the neural network with the best
validation error after time $t$, for $t$ up to 130 TPU hours.
We run 300 trials for each algorithm and record the mean test errors.
See Figure~\ref{fig:encoding_results} for the results on NASBench-101.
We present more experiments for NASBench-201
in Appendix~\ref{app:experiments},
seeing largely the same trends.
Depending on the subroutine, two encodings might be functionally equivalent, 
which is why not all encodings appear in each experiment
(for example, in local search, there is no difference between one-hot and
categorical encodings). 
There is no overall 
best encoding; instead, each encoding has varied performance for each subroutine,
and the results in Figure~\ref{fig:encoding_results} act as a guideline for
which encodings to use in which subroutines.
As a rule of thumb, the adjacency matrix-based encodings perform well
for the sample random architecture and perturb architecture subroutines,
but the path-based encodings far outperformed the adjacency matrix-based
encodings for the train predictor model subroutines.
Categorical, one-hot, adjacency-based, path-based,
and continuous encodings are all best in certain settings.
Some of our findings explain the success of prior algorithms, e.g., regularized
evolution using the categorical adjacency encoding, and BANANAS using the path encoding
in the meta neural network.
We also show that combining the best encodings for each subroutine in BANANAS yields the best performance.
Finally, we show that the path encoding even outperforms GCNs and VAEs in
the neural predictor experiment.

In Figure~\ref{fig:encoding_results}, \emph{Trunc.\ Path} denotes the path encoding
truncated from $\sum_{i=0}^5 3^i=364$ to $\sum_{i=0}^3 3^i=40.$
As predicted by Theorem~\ref{thm:characterization}, this does not decrease performance.
In fact, in regularized evolution, the truncation improves performance significantly because
perturbing with the full path encoding is more likely to add uncommon paths that do not improve accuracy.
We also evaluate the effect of truncating the one-hot adjacency matrix encoding 
on regularized evolution, from the full 31 bits (on NASBench-101) to 0 bits,
and the path encoding from 31 bits (out of 364) to 0 bits.
See Figure~\ref{fig:equiv_class}.
The path encoding is much more robust to truncation, consistent with 
Theorems~\ref{thm:characterization} and~\ref{thm:adjacency}.

\paragraph{Outside search space experiment.}
In the set of experiments above, we tested the effect of encodings on a neural
predictor model by computing the mean absolute error between
the predicted vs.\ actual errors on the test set,
and also by evaluating the performance of BANANAS when changing the encoding of
its neural predictor model.
The latter experiment tests the predictor model's ability to predict
the \emph{best} architectures, not just all architectures on average.
We take this one step further and test the ability of the neural predictor
to generalize beyond the search space on which it was trained.
We set up the experiment as follows. 
We define the training search space as a subset of NASBench-101:
architectures with at most 6 nodes and 7 edges.
We define the disjoint test search space as architectures
with 6 nodes and 7 to 9 edges.
The neural predictor is trained on 1000 architectures
and predicts the validation loss of the 5000 architectures from the test search space.
We evaluate the losses of the ten architectures with the highest predicted validation loss.
We run 200 trials for each encoding and average the results.
See Table~\ref{tab:outside_ss}.
The adjacency encoding performed the best. 
An explanation is that for the path encoding, there are features
(paths) in architectures from the test set that do not exist in the training set.
This is not the case for the adjacency encoding: all features (edges)
from architectures in the test set have shown up in the training set.

\begin{table*}[t]
\caption{Ability of neural predictor with different encodings to generalize 
beyond the search space.}
\setlength\tabcolsep{0pt}
\begin{tabular*}{\textwidth}{l @{\extracolsep{\fill}}*{8}{S[table-format=1.4]}} 
\toprule
\multicolumn{1}{c}{Encoding} & \multicolumn{2}{c}{Validation error} & 
\multicolumn{2}{c}{Test error} \\
\cmidrule{2-5} & {Top 10 avg.} & {Top 1 avg.} & {Top 10 avg.} & {Top 1 avg.} &  \\ 
\midrule
 Adjacency & \hspace{3.5mm}\textbf{5.888} & \hspace{2.5mm}\textbf{5.505} & \hspace{3.5mm}\textbf{6.454} & \hspace{2.5mm}\textbf{6.056} \\
 Categorical Adjacency & 7.589 & 6.191 & 8.155 & 7.086\\
 \midrule
 Path & 5.967 & 5.606 & 6.616 & 6.335\\
 Truncated Path & 6.082 & 5.644 & 6.712 & 6.452\\
 Categorical Path & 6.357 & 5.703 & 6.939 & 6.489\\
 Truncated Categorical Path & 6.339 & 5.895 & 6.918 & 6.766\\
\bottomrule
\end{tabular*} 
\label{tab:outside_ss}
\end{table*}

\paragraph{Equivalence class experiments.}
Recall that the path encoding function $e$ is not one-to-one 
(see Figure~\ref{fig:non_onto}).
In general, this is not desirable because information is lost when two architectures map to the same encoding. 
However, if the encoding function only maps architectures with similar accuracies to the same encoding, then the behavior is beneficial.
On the NASBench-101 dataset, we compute the path encoding of all 423k
architectures, and then we compute the average
standard deviation of accuracies among architectures with the same encoding
(i.e., we look at the standard deviations within the equivalence classes defined by the encoding).
See Figure~\ref{fig:equiv_class}.
The result is an average standard
deviation of 0.353\%, compared to the 5.55\% standard deviation over the 
entire set of architectures. 

\begin{figure*}
\centering %
\includegraphics[width=0.27\textwidth]{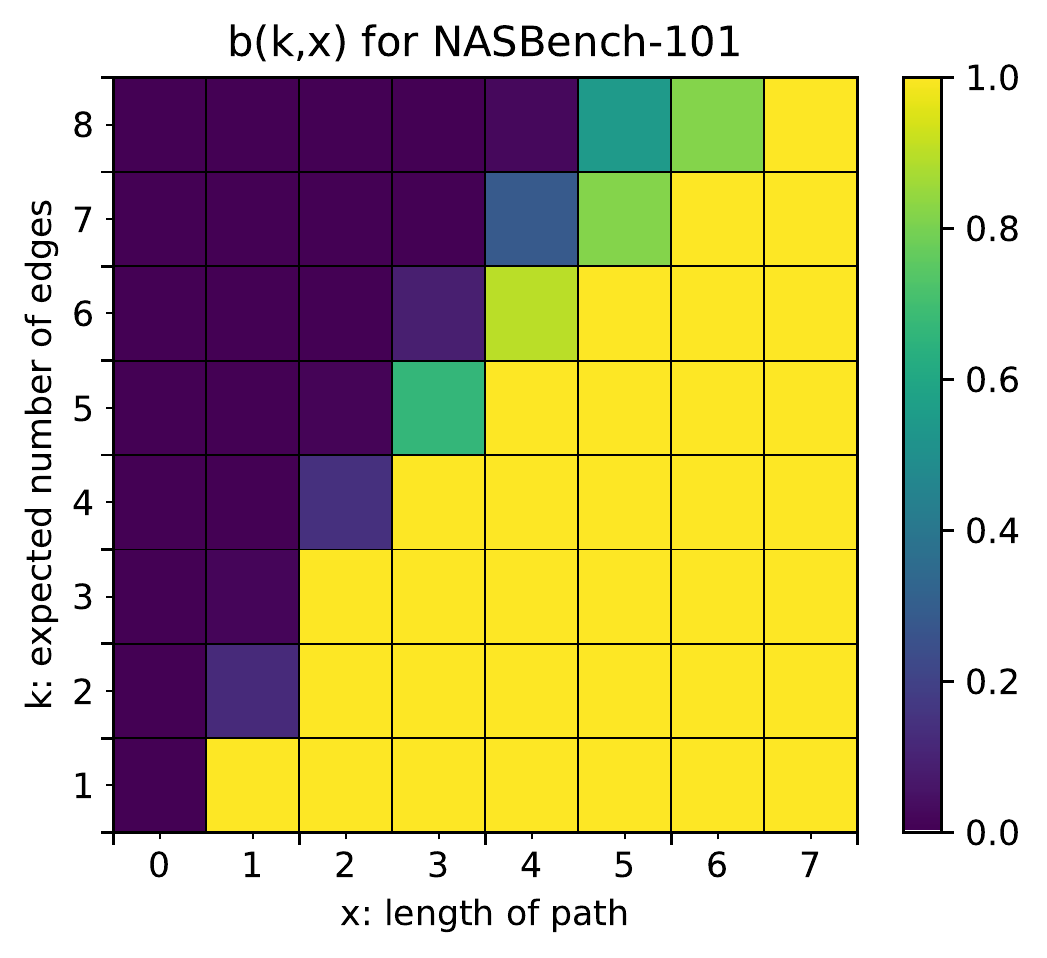}
\hspace{-3pt}
\includegraphics[width=0.35\textwidth]{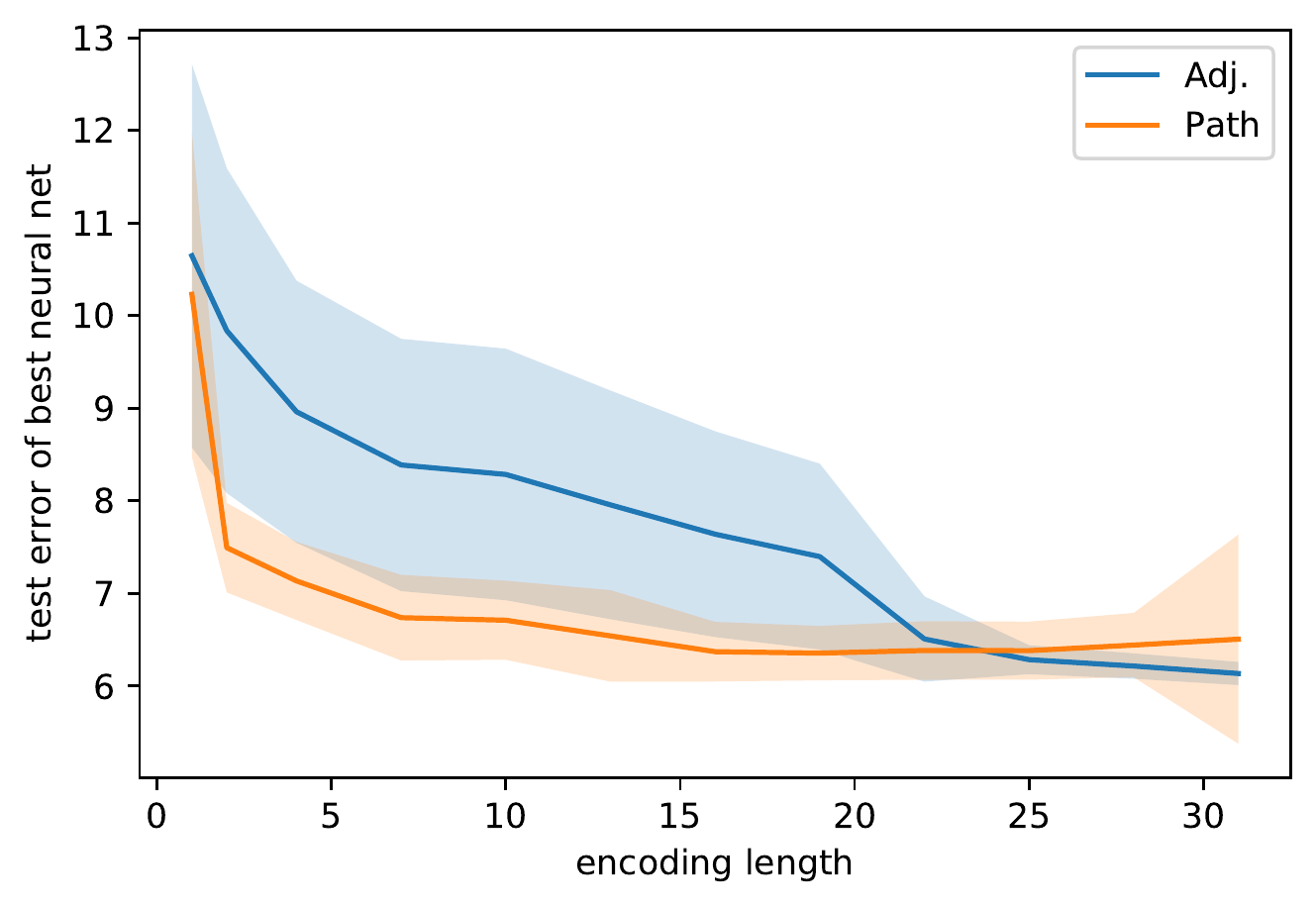}
\hspace{-3pt}
\includegraphics[width=0.35\textwidth]{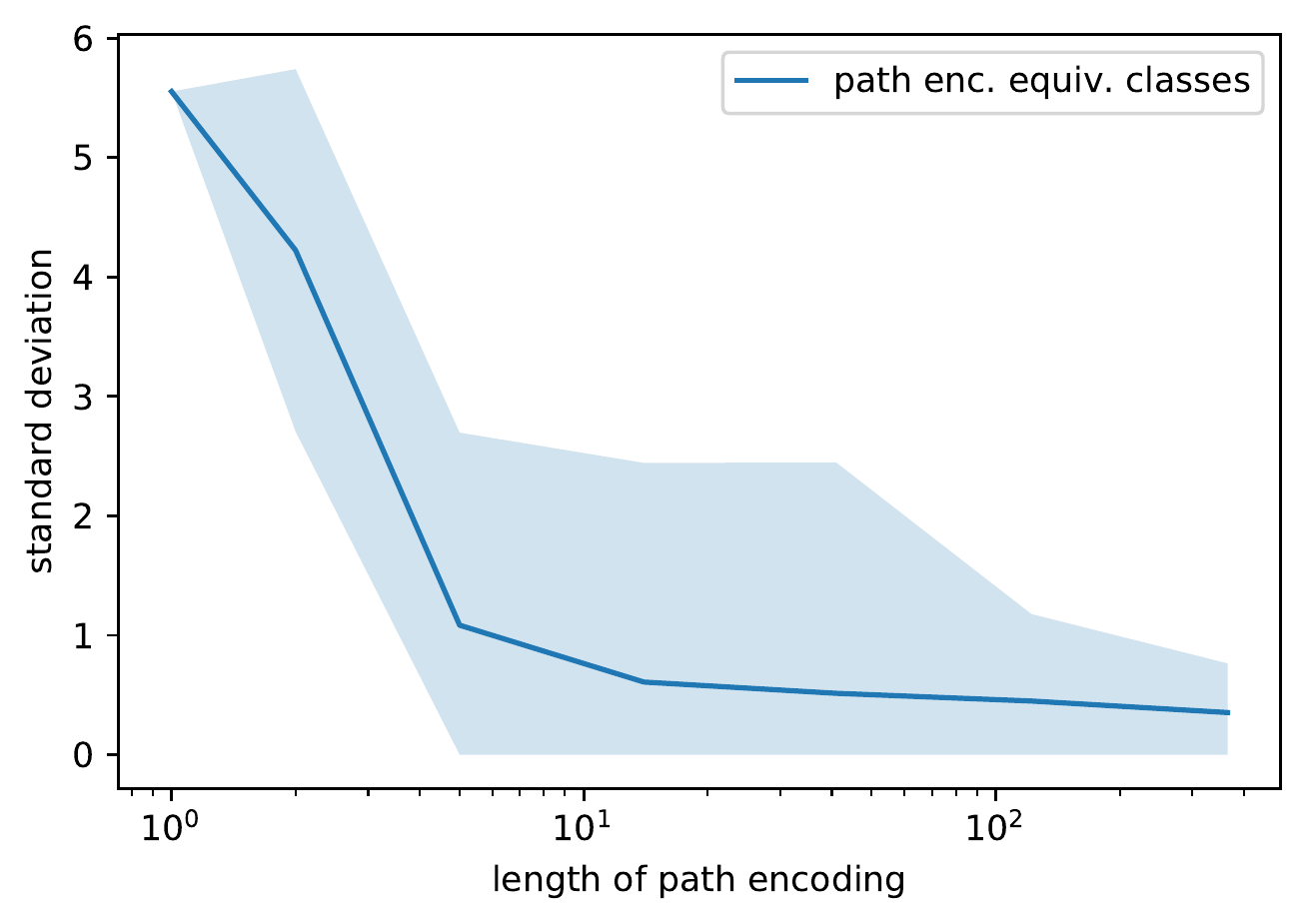}
\caption{
Plot of $b(k,x)$ on NASBench-101 (left), which is consistent 
with Theorem~\ref{thm:characterization}.
Truncation of encodings for regularized evolution on NASBench-101 (middle).
Average standard deviation of accuracies within each equivalence class defined by the path encoding at different
levels of truncation on NASBench-101 (right).
}
\label{fig:equiv_class}
\end{figure*}

\section{Conclusion}
\label{sec:conclusion}

In this paper, we give the first formal study of encoding schemes for neural 
architecture search.
We define eight different encodings and characterize the scalability of each one.
We then identify three encoding-dependent subroutines used by NAS algorithms---\emph{sample
random architecture}, \emph{perturb architecture}, and \emph{train predictor model}---and
we run experiments to find the best encoding 
for each subroutine in many popular algorithms.
We also conduct experiments on the ability of a neural predictor to generalize beyond
the training search space, given each encoding.
Our experimental results allow us to disentangle the algorithmic and encoding-based
contributions of prior work, and act as a set of guidelines for which encodings to
use in future work.
Overall, we show that encodings are an important, nontrivial design decision in the
field of NAS. Designing and testing new encodings is an exciting next step.

\section{Broader Impact} \label{sec:impact}
Our work gives a study on encodings for neural architecture search, with the goal
of helping future researchers improve their NAS algorithms.
Therefore, this work may not have a direct impact on society,
since it is two levels of abstraction from real applications,
but it can indirectly impact society.
As an example, our work may inspire the creation of a new state-of-the-art NAS
algorithm, which is then used to improve the performance of various deep learning
algorithms, which can have both beneficial and detrimental uses (e.g. optimizers that reduce $\text{CO}_2$ emissions, or deep fake generators).
Due to the recent push for the AI community to be more
conscious and prescient about the societal impact of its work~\cite{hecht2018time},
we are hoping that future AI models, including ones influenced by our work, will have
a positive impact on society.

\section{Acknowledgments}
We thank the anonymous reviewers for their helpful suggestions. WN was supported by U.S. Department of Energy Office of Science under Contract No. DE-AC02-76SF00515.

\bibliography{main}

\begin{thebibliography}{10}

\bibitem{nasbench201}
Xuanyi Dong and Yi~Yang.
\newblock Nas-bench-201: Extending the scope of reproducible neural
  architecture search.
\newblock In {\em Proceedings of the International Conference on Learning
  Representations (ICLR)}, 2020.

\bibitem{nas-survey}
Thomas Elsken, Jan~Hendrik Metzen, and Frank Hutter.
\newblock Neural architecture search: A survey.
\newblock {\em arXiv preprint arXiv:1808.05377}, 2018.

\bibitem{hecht2018time}
Brent Hecht, Lauren Wilcox, Jeffrey~P Bigham, Johannes Sch{\"o}ning, Ehsan
  Hoque, Jason Ernst, Yonatan Bisk, Luigi De~Russis, Lana Yarosh, Bushra Anjum,
  Danish Contractor, and Cathy Wu.
\newblock It’s time to do something: Mitigating the negative impacts of
  computing through a change to the peer review process.
\newblock {\em ACM Future of Computing Blog}, 2018.

\bibitem{irwin2019graph}
William Irwin-Harris, Yanan Sun, Bing Xue, and Mengjie Zhang.
\newblock A graph-based encoding for evolutionary convolutional neural network
  architecture design.
\newblock In {\em 2019 IEEE Congress on Evolutionary Computation (CEC)}, pages
  546--553. IEEE, 2019.

\bibitem{auto-keras}
Haifeng Jin, Qingquan Song, and Xia Hu.
\newblock Auto-keras: Efficient neural architecture search with network
  morphism.
\newblock {\em arXiv preprint arXiv:1806.10282}, 2018.

\bibitem{nasbot}
Kirthevasan Kandasamy, Willie Neiswanger, Jeff Schneider, Barnabas Poczos, and
  Eric~P Xing.
\newblock Neural architecture search with bayesian optimisation and optimal
  transport.
\newblock In {\em Advances in Neural Information Processing Systems}, pages
  2016--2025, 2018.

\bibitem{kitano1990designing}
Hiroaki Kitano.
\newblock Designing neural networks using genetic algorithms with graph
  generation system.
\newblock {\em Complex systems}, 4(4):461--476, 1990.

\bibitem{randomnas}
Liam Li and Ameet Talwalkar.
\newblock Random search and reproducibility for neural architecture search.
\newblock {\em arXiv preprint arXiv:1902.07638}, 2019.

\bibitem{lindauer2019best}
Marius Lindauer and Frank Hutter.
\newblock Best practices for scientific research on neural architecture search.
\newblock {\em arXiv preprint arXiv:1909.02453}, 2019.

\bibitem{darts}
Hanxiao Liu, Karen Simonyan, and Yiming Yang.
\newblock Darts: Differentiable architecture search.
\newblock {\em arXiv preprint arXiv:1806.09055}, 2018.

\bibitem{maziarz2018evolutionary}
Krzysztof Maziarz, Andrey Khorlin, Quentin de~Laroussilhe, and Andrea Gesmundo.
\newblock Evolutionary-neural hybrid agents for architecture search.
\newblock {\em arXiv preprint arXiv:1811.09828}, 2018.

\bibitem{neiswanger2019probo}
Willie Neiswanger, Kirthevasan Kandasamy, Barnabas Poczos, Jeff Schneider, and
  Eric Xing.
\newblock Probo: a framework for using probabilistic programming in bayesian
  optimization.
\newblock {\em arXiv preprint arXiv:1901.11515}, 2019.

\bibitem{enas}
Hieu Pham, Melody~Y Guan, Barret Zoph, Quoc~V Le, and Jeff Dean.
\newblock Efficient neural architecture search via parameter sharing.
\newblock {\em arXiv preprint arXiv:1802.03268}, 2018.

\bibitem{real2019regularized}
Esteban Real, Alok Aggarwal, Yanping Huang, and Quoc~V Le.
\newblock Regularized evolution for image classifier architecture search.
\newblock In {\em Proceedings of the AAAI conference on artificial
  intelligence}, volume~33, pages 4780--4789, 2019.

\bibitem{shi2019multi}
Han Shi, Renjie Pi, Hang Xu, Zhenguo Li, James~T Kwok, and Tong Zhang.
\newblock Multi-objective neural architecture search via predictive network
  performance optimization.
\newblock {\em arXiv preprint arXiv:1911.09336}, 2019.

\bibitem{stanica2001good}
Pantelimon Stanica.
\newblock Good lower and upper bounds on binomial coefficients.
\newblock {\em Journal of Inequalities in Pure and Applied Mathematics}, 2001.

\bibitem{stanley2002evolving}
Kenneth~O Stanley and Risto Miikkulainen.
\newblock Evolving neural networks through augmenting topologies.
\newblock {\em Evolutionary computation}, 10(2):99--127, 2002.

\bibitem{stanleyhypercube}
KO~Stanley, DB~D'Ambrosio, and J~Gauci.
\newblock A hypercube-based indirect encoding for evolving large-scale neural
  networks.
\newblock In {\em Artificial Life}, volume 15(2), pages 185--212, 2009.

\bibitem{sun2019evolving}
Yanan Sun, Bing Xue, Mengjie Zhang, and Gary~G Yen.
\newblock Evolving deep convolutional neural networks for image classification.
\newblock {\em IEEE Transactions on Evolutionary Computation}, 2019.

\bibitem{talbi2020optimization}
El-Ghazali Talbi.
\newblock Optimization of deep neural networks: a survey and unified taxonomy.
\newblock {\em HAL-Inria}, 2020.

\bibitem{efficientnets}
Mingxing Tan and Quoc~V Le.
\newblock Efficientnet: Rethinking model scaling for convolutional neural
  networks.
\newblock {\em arXiv preprint arXiv:1905.11946}, 2019.

\bibitem{npenas}
Chen Wei, Chuang Niu, Yiping Tang, and Jimin Liang.
\newblock Npenas: Neural predictor guided evolution for neural architecture
  search.
\newblock {\em arXiv preprint arXiv:2003.12857}, 2020.

\bibitem{wen2019neural}
Wei Wen, Hanxiao Liu, Hai Li, Yiran Chen, Gabriel Bender, and Pieter-Jan
  Kindermans.
\newblock Neural predictor for neural architecture search.
\newblock {\em arXiv preprint arXiv:1912.00848}, 2019.

\bibitem{bananas}
Colin White, Willie Neiswanger, and Yash Savani.
\newblock Bananas: Bayesian optimization with neural architectures for neural
  architecture search.
\newblock {\em arXiv preprint arXiv:1910.11858}, 2019.

\bibitem{white2020local}
Colin White, Sam Nolen, and Yash Savani.
\newblock Local search is state of the art for nas benchmarks.
\newblock {\em arXiv preprint arXiv:2005.02960}, 2020.

\bibitem{yang2019evaluation}
Antoine Yang, Pedro~M Esperan{\c{c}}a, and Fabio~M Carlucci.
\newblock Nas evaluation is frustratingly hard.
\newblock In {\em Proceedings of the International Conference on Learning
  Representations (ICLR)}, 2020.

\bibitem{nasbench}
Chris Ying, Aaron Klein, Esteban Real, Eric Christiansen, Kevin Murphy, and
  Frank Hutter.
\newblock Nas-bench-101: Towards reproducible neural architecture search.
\newblock In {\em Proceedings of the International Conference on Machine
  Learning (ICML)}, 2019.

\bibitem{dvae}
Muhan Zhang, Shali Jiang, Zhicheng Cui, Roman Garnett, and Yixin Chen.
\newblock D-vae: A variational autoencoder for directed acyclic graphs.
\newblock In {\em Proceedings of the Annual Conference on Neural Information
  Processing Systems (NIPS)}, 2019.

\bibitem{zoph2017neural}
Barret Zoph and Quoc~V. Le.
\newblock Neural architecture search with reinforcement learning.
\newblock In {\em Proceedings of the International Conference on Learning
  Representations (ICLR)}, 2017.

\end{thebibliography}
\bibliographystyle{plain}

\newpage
\appendix
\section{Details from Section~\ref{sec:prelim} (Encodings for NAS) \label{app:prelim}}

We give the details from Section~\ref{sec:prelim}.
We restate the random graph model here more formally.
\begin{restatable}{redef}{randomgraph}\label{def:random_graph}
Given nonzero integers $n, r$, and $k<\nicefrac{n(n-1)}{2}$,
a random graph $G_{n, k, r}$ is generated as follows: \\
(1) Denote $n$ nodes by 1 to $n$ and label each node randomly with one of $r$ operations.\\
(2) For all $i<j$, add edge $(i,j)$ with probability 
$\frac{2k}{n(n-1)}$.\\
(3) If there is no path from node 1 to node $n$, \texttt{goto} \texttt{(1)}.
\end{restatable}

Let $G'_{n, k, r}$ denote the random graph outputted by
the above procedure without step (3).
Since the number of pairs $(i,j)$ such that $i<j$ is 
$\frac{n(n-1)}{2}$, the expected number of edges of $G'_{n, k, r}$ is $k$.
Define $a_{n,k,\ell}$ as the expected number of paths
from node 1 to node $n$ of length $\ell$ in $G'_{n,k,r}$.
Formally, we set $\P=\{\text{paths from node 1 to }n\text{ in }G'_{n,k,r}\}$,
and define
\begin{equation*}
a_{n,k,\ell}=\mathbb{E}\left[\left|p\in\mathcal{P}\right|\mid |p|=\ell\right].
\end{equation*}

Recall that 
\begin{equation*}
    b(k, x)=\frac{\sum_{\ell=1}^x a_{n,k,\ell}}{\sum_{\ell=1}^n a_{n,k,\ell}}.
\end{equation*}

Recall that Theorem~\ref{thm:characterization} gives a 
full characterization of $b(k,x)$ in terms of
$k$ and $n$, up to constant factors. 
We prove there exists a phase transition
for $b(k,x)$ at $x=\frac{k}{n}.$
We restate this theorem for convenience.

\characterization*

As noted by prior work~\cite{bananas}, 
there are two caveats when applying this type of theorem to NAS performance.
The theorem considers the distribution from Definition~\ref{def:random_graph},
not the distribution of architectures encountered in a real search,
and the most common paths in the distribution are not necessarily the ones
with the most entropy in predicting whether an architecture has a high accuracy.
However, two prior works have experimentally showed that truncating the path
encoding does not decrease performance~\cite{npenas, bananas},
and we gave even more experimental evidence in Section~\ref{sec:experiments}.

To prove Theorem~\ref{thm:characterization},
we use the following well-known
bounds on binomial coefficients (e.g., \cite{stanica2001good}).

\begin{theorem} \label{thm:binomial}
Given $0\leq \ell\leq n$, 
\begin{equation*}
    \left(\frac{n}{\ell}\right)^\ell \leq \binom{n}{\ell}
    \leq \left(\frac{en}{\ell}\right)^\ell.
\end{equation*}
\end{theorem}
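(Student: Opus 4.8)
The plan is to establish the two bounds on $\binom{n}{\ell}$ independently, each by an elementary manipulation of the product form of the binomial coefficient. First I would dispose of the degenerate case $\ell=0$, where all three quantities equal $1$ under the convention $x^0=1$, and then assume $1\le\ell\le n$ for the remainder.

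For the lower bound, I would use the factorization
\[
\binom{n}{\ell}=\frac{n(n-1)\cdots(n-\ell+1)}{\ell!}=\prod_{i=0}^{\ell-1}\frac{n-i}{\ell-i},
\]
and show that each of the $\ell$ factors is at least $n/\ell$. For $0\le i\le \ell-1$ the claim $\frac{n-i}{\ell-i}\ge\frac{n}{\ell}$ is equivalent, after clearing the positive denominators $\ell$ and $\ell-i$, to $\ell(n-i)\ge n(\ell-i)$, i.e.\ to $i(n-\ell)\ge 0$, which holds since $n\ge\ell$ and $i\ge 0$. Taking the product over all $\ell$ factors then yields $\binom{n}{\ell}\ge(n/\ell)^{\ell}$.

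For the upper bound, I would combine the crude estimate $\binom{n}{\ell}\le n^{\ell}/\ell!$ with the factorial lower bound $\ell!\ge(\ell/e)^{\ell}$. The latter follows by keeping only the single term $j=\ell$ in the nonnegative exponential series $e^{\ell}=\sum_{j\ge 0}\ell^{j}/j!\ge \ell^{\ell}/\ell!$, which rearranges to $\ell!\ge \ell^{\ell}/e^{\ell}=(\ell/e)^{\ell}$. Substituting gives $\binom{n}{\ell}\le n^{\ell}/(\ell/e)^{\ell}=(en/\ell)^{\ell}$, completing the argument.

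There is no substantive obstacle in this proof, since both halves reduce to a single inequality apiece; the only points needing care are the term-by-term comparison $\frac{n-i}{\ell-i}\ge\frac{n}{\ell}$ in the lower bound and the extraction of the $\ell^{\ell}/\ell!$ term for the factorial estimate. Because the statement is classical (the paper cites \cite{stanica2001good}), I would present it compactly as the two-line derivation above rather than routing through Stirling's formula.
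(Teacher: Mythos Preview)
Your proof is correct and entirely standard. The paper does not actually prove this statement at all: it merely records it as a well-known inequality with a citation to \cite{stanica2001good} and then invokes it in Fact~\ref{fact:binom}. Your two-line argument (the term-by-term comparison $\frac{n-i}{\ell-i}\ge\frac{n}{\ell}$ for the lower bound, and $\ell!\ge(\ell/e)^\ell$ via the exponential series for the upper bound) is exactly the kind of elementary derivation one would expect behind such a citation, so there is nothing to compare---you have simply filled in what the paper leaves to the reference.
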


Now we give upper and lower bounds on $a_{n,k,\ell}$ which will be used
for the rest of the proofs.
The next fact is similar to Lemma C.3 from BANANAS~\cite{bananas}.

\begin{fact}\label{fact:binom}
Given $n\leq k \leq \frac{n(n-1)}{2}$, and $0<x<n$,
we have
\begin{equation*}
\frac{2k}{n(n-1)}\left(\frac{2k(n-2)}{(\ell-1)n(n-1)}\right)^{\ell-1}
\leq a_{n,k,\ell}\leq
\frac{2k}{n(n-1)}\left(\frac{2ek(n-2)}{(\ell-1)n(n-1)}\right)^{\ell-1}
\end{equation*}
\end{fact}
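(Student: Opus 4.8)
The plan is to obtain an exact expression for $a_{n,k,\ell}$ by linearity of expectation and then sandwich the resulting binomial coefficient using the bounds of Theorem~\ref{thm:binomial}. First I would enumerate the candidate paths of length $\ell$ from node $1$ to node $n$ in $G'_{n,k,r}$. Such a path visits $\ell+1$ vertices: node $1$, node $n$, and $\ell-1$ intermediate vertices chosen from the remaining $n-2$. Since $G'_{n,k,r}$ only contains edges $(i,j)$ with $i<j$ (Definition~\ref{def:random_graph}), once a set of $\ell-1$ intermediate vertices is fixed, their ordering along the path is forced to be increasing, so each $(\ell-1)$-element subset of the intermediate vertices corresponds to exactly one candidate path. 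Hence there are precisely $\binom{n-2}{\ell-1}$ candidate paths.

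Next I would compute the expectation. Each of the $\ell$ edges on a fixed candidate path is included independently with probability $p=\frac{2k}{n(n-1)}$, so that path is present with probability $p^{\ell}$. Summing over all candidate paths by linearity of expectation gives
\[
a_{n,k,\ell} = \binom{n-2}{\ell-1}\,p^{\ell}
= \binom{n-2}{\ell-1}\left(\frac{2k}{n(n-1)}\right)^{\ell}.
\]
The boundary case $\ell=1$ reduces to the single direct edge $(1,n)$, with both bounds collapsing to $\frac{2k}{n(n-1)}$ under the convention $z^0=1$.

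Finally I would split off one factor of $p$ and apply Theorem~\ref{thm:binomial} to the remaining binomial coefficient. Writing $a_{n,k,\ell}=\frac{2k}{n(n-1)}\binom{n-2}{\ell-1}\left(\frac{2k}{n(n-1)}\right)^{\ell-1}$ and substituting the two-sided estimate $\left(\frac{n-2}{\ell-1}\right)^{\ell-1}\leq\binom{n-2}{\ell-1}\leq\left(\frac{e(n-2)}{\ell-1}\right)^{\ell-1}$, the factor $\left(\frac{2k}{n(n-1)}\right)^{\ell-1}$ combines with the bracketed terms to produce exactly $\left(\frac{2k(n-2)}{(\ell-1)n(n-1)}\right)^{\ell-1}$ on the lower side and $\left(\frac{2ek(n-2)}{(\ell-1)n(n-1)}\right)^{\ell-1}$ on the upper side, which is the claimed inequality. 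I do not expect a genuine obstacle; the only point requiring care is the counting step, where the DAG orientation must be used to argue that the ordering of the chosen intermediate vertices is forced (giving a subset count $\binom{n-2}{\ell-1}$ rather than an arrangement count), together with the observation that the expectation is taken over the unconditioned graph $G'_{n,k,r}$, for which the edge-inclusion events are genuinely independent.
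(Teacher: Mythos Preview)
Your proposal is correct and follows essentially the same approach as the paper: derive the exact expression $a_{n,k,\ell}=\binom{n-2}{\ell-1}\left(\frac{2k}{n(n-1)}\right)^{\ell}$ by counting intermediate nodes and using edge independence in $G'_{n,k,r}$, then apply Theorem~\ref{thm:binomial}. Your write-up is in fact more careful than the paper's, since you explicitly justify why the DAG orientation forces a subset count rather than an arrangement count and note that working with the unconditioned $G'_{n,k,r}$ is what guarantees independence of the edge events.
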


\begin{proof}
First, we have
\begin{equation*}
a_{n,k,\ell}=\binom{n-2}{\ell-1}\left(\frac{2k}{n(n-1)}\right)^\ell
\end{equation*}
because on a path from node 1 to node $n$ with length $\ell$,
there are $\binom{n-2}{\ell-1}$ choices of intermediate
nodes from 1 to $n$. Once the nodes are chosen, we need all $\ell$
edges between the nodes to exist, and each edge exists independently 
with probability $\frac{2}{n(n-1)}\cdot k.$
Then we achieve the desired result 
by applying Theorem~\ref{thm:binomial}.
\end{proof}

Now we prove the lower bound of Theorem~\ref{thm:characterization}.

\begin{lemma} \label{lem:upper}
Given $n\leq k \leq \frac{n(n-1)}{2}$ and $c>2$,
for $x> \frac{2eck}{n}$, $b(k,x)>1-c^{-x+1}$.
\end{lemma}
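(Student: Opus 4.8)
The plan is to rewrite $b(k,x)$ as $1$ minus the tail fraction $\frac{\sum_{\ell=x+1}^n a_{n,k,\ell}}{\sum_{\ell=1}^n a_{n,k,\ell}}$ and to show this fraction is strictly smaller than $c^{-x+1}$. For the denominator I would keep only the single length-one path, using that $a_{n,k,1}=\frac{2k}{n(n-1)}$ (the only path of length $1$ is the direct edge from node $1$ to node $n$), so that $\sum_{\ell=1}^n a_{n,k,\ell}\geq a_{n,k,1}$. The substance of the argument is an upper bound on the tail numerator $\sum_{\ell=x+1}^n a_{n,k,\ell}$.

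For that tail I would invoke Fact~\ref{fact:binom} to replace each $a_{n,k,\ell}$ by its upper bound $\frac{2k}{n(n-1)}\left(\frac{2ek(n-2)}{(\ell-1)n(n-1)}\right)^{\ell-1}$. The key observation is that once $\ell\geq x+1$, so $\ell-1\geq x>\frac{2eck}{n}$, the base of the power satisfies $\frac{2ek(n-2)}{(\ell-1)n(n-1)}\leq\frac{2ek}{(\ell-1)n}\leq\frac{2ek}{xn}<\frac1c$, where I used $n-2\leq n-1$. Hence each tail term is at most $\frac{2k}{n(n-1)}\,c^{-(\ell-1)}$, and dominating the finite sum by the infinite geometric tail (ratio $1/c<1/2$ for $c>2$) gives $\sum_{\ell=x+1}^n a_{n,k,\ell}<\frac{2k}{n(n-1)}\,c^{-x+1}$; the leftover factor $\frac{1}{c-1}<1$ from summing the series is what lets me keep the clean exponent $-x+1$.

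Combining the two bounds, I would chain $b(k,x)=\frac{\sum_{\ell=1}^x a_{n,k,\ell}}{\sum_{\ell=1}^n a_{n,k,\ell}}\geq\frac{a_{n,k,1}}{a_{n,k,1}+\sum_{\ell=x+1}^n a_{n,k,\ell}}\geq\frac{1}{1+c^{-x+1}}\geq 1-c^{-x+1}$, where the final step is the elementary inequality $\frac{1}{1+\delta}\geq 1-\delta$. The monotonicity justifying the first inequality is that $t\mapsto\frac{t}{t+R}$ increases in $t$ (so dropping the higher-length terms in the numerator only decreases the value) and that $a_1\mapsto\frac{a_1}{a_1+R}$ decreases in $R$ (so substituting the tail upper bound is valid).

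The step I expect to be most delicate is verifying the base-of-the-power inequality $\frac{2ek(n-2)}{(\ell-1)n(n-1)}\leq\frac1c$ uniformly over the whole range $\ell\in\{x+1,\dots,n\}$, not just at the endpoint. Since this base is decreasing in $\ell$, checking it at the smallest index $\ell=x+1$ (i.e.\ $\ell-1=x$) suffices, and that is precisely where the hypothesis $x>\frac{2eck}{n}$ is consumed. Everything after that is a routine geometric-series estimate together with $\frac{1}{1+\delta}\geq1-\delta$.
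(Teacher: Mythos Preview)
Your proposal is correct and follows essentially the same route as the paper: lower bound the numerator by $a_{n,k,1}=\frac{2k}{n(n-1)}$, upper bound each tail term via Fact~\ref{fact:binom} together with the hypothesis $x>\frac{2eck}{n}$ to force the base below $1/c$, sum the geometric series (using $c>2$ to absorb the constant), and finish with $\frac{1}{1+\delta}\geq 1-\delta$. The only cosmetic difference is that the paper factors the series as $(1/c)^x\cdot\frac{c}{c-1}<(1/c)^{x-1}$ while you write it as $(1/c)^{x-1}\cdot\frac{1}{c-1}<(1/c)^{x-1}$; these are the same computation.
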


\begin{proof}
Given $n\leq k \leq \frac{n(n-1)}{2}$ and $x> \frac{2eck}{n}$,
we give a lower bound for $\sum_{\ell=1}^x a_{n,k,\ell}$
and an upper bound for $\sum_{\ell=x+1}^n a_{n,k,\ell}$.

When $\ell=1$, we have $\binom{n-2}{\ell-1}=1$.
Therefore,
\begin{equation*}
\sum_{\ell=1}^x a_{n,k,\ell}\geq 
a_{n,k,1}=\frac{2k}{n(n-1)}.
\end{equation*}

Now we upper bound  $\sum_{\ell=x+1}^n a_{n,k,\ell}$.

\begin{align}
\sum_{\ell=x+1}^n a_{n,k,\ell}
&\leq \sum_{\ell=x+1}^n \frac{2k}{n(n-1)}
\left(\frac{2ek(n-2)}{(\ell-1)n(n-1)}\right)^{\ell-1}\nonumber \\
&= \frac{2k}{n(n-1)}\sum_{\ell=x+1}^n \left(\frac{2ek(n-2)}{(\ell-1)n(n-1)}\right)^{\ell-1}\nonumber\\
&\leq \frac{2k}{n(n-1)}\sum_{\ell=x+1}^n \left(\frac{1}{c}\right)^{\ell-1}\label{eq:upper_sum}\\
&\leq \left(\frac{2k}{n(n-1)}\right)
\left(\frac{1}{c}\right)^x\sum_{\ell=0}^{\infty}\left(\frac{1}{c}\right)^\ell \nonumber\\
&= \left(\frac{2k}{n(n-1)}\right)
\left(\frac{1}{c}\right)^x\left(\frac{1}{1-\frac{1}{c}}\right)\nonumber\\
&= \left(\frac{2k}{n(n-1)}\right)
\left(\frac{1}{c}\right)^x\left(\frac{c}{c-1}\right)\nonumber\\
&< \left(\frac{2k}{n(n-1)}\right)
\left(\frac{1}{c}\right)^{x-1}\label{eq:upper_c}
\end{align}

In inequality~\ref{eq:upper_sum}, we use the fact that for all $\ell\geq x+1$,
\begin{equation*}
\ell\geq x+1> \frac{2eck}{n}+1\implies
\frac{2ek(n-2)}{(\ell-1)n(n-1)}\leq\frac{2ek}{(\ell-1)n}\leq \frac{1}{c}
\end{equation*}
and in inequality~\ref{eq:upper_c}, we use the fact that $c>2$.

Therefore, we have
\begin{align}
b(k,x)&=\frac{\sum_{\ell=1}^x a_{n,k,\ell}}{\sum_{\ell=1}^n a_{n,k,\ell}}\nonumber\\
&=\frac{\sum_{\ell=1}^x a_{n,k,\ell}}{\sum_{\ell=1}^x a_{n,k,\ell}
+\sum_{\ell=x+1}^n a_{n,k,\ell}}\nonumber\\
&\geq \frac{\frac{2k}{n(n-1)}}{\frac{2k}{n(n-1)}
+\left(\frac{2k}{n(n-1)}\right)\left(\frac{1}{c}\right)^{x-1}}\label{eq:replace}\\
&= \frac{1}{1+\left(\frac{1}{c}\right)^{x-1}}\nonumber\\
&\geq 1-c^{-x+1}.\nonumber
\end{align}
In inequality~\ref{eq:replace}, we use the fact that for all $0\leq A, B, C$, 
we know that $A\geq B$ implies $\frac{A}{A+C}\geq\frac{B}{B+C}$.

\end{proof}

Now we prove the upper bound for Theorem~\ref{thm:characterization}.

\begin{lemma} \label{lem:lower}
Given $10\leq n\leq k \leq \frac{n(n-1)}{2}$ and $c>3$,
for $x< \frac{k}{2ecn}$, $b(k,x)<2^{-\frac{k}{2n}}$.
\end{lemma}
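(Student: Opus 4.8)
The plan is to sandwich $b(k,x)$ by controlling its numerator $\sum_{\ell=1}^{x} a_{n,k,\ell}$ and denominator $\sum_{\ell=1}^{n} a_{n,k,\ell}$ separately through the two-sided estimate of Fact~\ref{fact:binom}. Writing $m=\ell-1$, Fact~\ref{fact:binom} shows $a_{n,k,\ell}$ behaves like $\frac{2k}{n(n-1)}(B/m)^{m}$ with $B\approx \frac{2k}{n}$, a quantity that increases in $m$ up to its maximizer near $m=B/e$ and decreases afterward. Since the hypotheses $x<\frac{k}{2ecn}$ and $c>3$ place $x$ far to the left of this peak, the first step is to observe that on the range $\ell\le x$ the terms are increasing, so $\sum_{\ell=1}^{x} a_{n,k,\ell}\le x\, a_{n,k,x}$. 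Feeding $x<\frac{k}{2ecn}$ into the upper bound of Fact~\ref{fact:binom} and using monotonicity of $m\mapsto(D/m)^{m}$ below its peak (with $D=\frac{2ek(n-2)}{n(n-1)}$) then gives a clean numerator bound of the form $x\cdot\frac{2k}{n(n-1)}(4e^{2}c)^{k/(2ecn)}$.

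For the denominator, rather than summing I would keep a single term evaluated near the maximizer of $a_{n,k,\ell}$, i.e.\ at $\ell^{\ast}-1\approx \frac{2k(n-2)}{en(n-1)}$. Plugging this into the lower bound of Fact~\ref{fact:binom} yields $\sum_{\ell=1}^{n} a_{n,k,\ell}\ge \frac{2k}{n(n-1)}\exp\!\big(\tfrac{2(n-2)}{e(n-1)}\cdot\tfrac{k}{n}\big)$, i.e.\ a rate of roughly $2^{0.94\,k/n}$ once $n\ge10$. The reason for evaluating at the peak rather than at the more convenient point $\ell-1=k/n$ (which only gives the base $2(1-\tfrac{1}{n-1})$) is that for small $n$, e.g.\ $n=10$, the latter base falls below the threshold needed to beat the target, whereas the peak clears it; this is precisely where the hypothesis $n\ge10$ is used.

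Dividing, the common factor $\frac{2k}{n(n-1)}$ cancels and I am left with $b(k,x)\le x\exp\!\big((\mu-\nu)\tfrac{k}{n}\big)$, where $\mu=\frac{\ln(4e^{2}c)}{2ec}$ is the numerator rate and $\nu=\frac{2(n-2)}{e(n-1)}$ the denominator rate. Comparing against the target $2^{-k/(2n)}=\exp\!\big(-\tfrac{\ln 2}{2}\cdot\tfrac{k}{n}\big)$, it suffices to show $\ln x\le \delta\,\tfrac{k}{n}$ with $\delta=\nu-\mu-\tfrac{\ln 2}{2}$. Finally I would absorb the stray polynomial factor $x$: since $x<\frac{k}{2ecn}$ we have $\ln x<\ln\frac{k}{n}-\ln(2ec)$, so with $u=k/n\ge1$ it is enough that $\ln u\le \delta u+\ln(2ec)$ for all $u\ge1$, which (maximizing $\ln u-\delta u$ at $u=1/\delta$) reduces to the single scalar inequality $\delta\ge \frac{1}{2e^{2}c}$.

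The main obstacle is the constant bookkeeping in this last step: one must check that the rate gap $\delta$ is not merely positive but at least $\frac{1}{2e^{2}c}$, so that the exponential decay genuinely dominates the linear-in-$k/n$ slack introduced by the crude factor $x$. For $c>3$ and $n\ge10$ one has $\nu\ge \frac{16}{9e}$ and $\mu\le \frac{\ln(12e^{2})}{6e}$, whence a direct numerical check gives $\delta\ge 0.03>\frac{1}{2e^{2}c}$; moreover $\delta$ only increases and $\frac{1}{2e^{2}c}$ only decreases as $c$ or $n$ grow, so the bound holds throughout the stated range. Tightening these constants (which is exactly what forces the $n\ge10$ and $c>3$ thresholds) is the only delicate part; everything else is a routine application of Fact~\ref{fact:binom}, Theorem~\ref{thm:binomial}, and the unimodality of $(D/m)^{m}$.
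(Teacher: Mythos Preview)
Your proposal is correct and follows essentially the same strategy as the paper: bound the numerator by $x$ times its largest summand via the monotonicity of $m\mapsto(D/m)^m$ below the peak, bound the denominator below by a single near-peak term of the lower estimate in Fact~\ref{fact:binom}, cancel the common prefactor $\tfrac{2k}{n(n-1)}$, and then verify a scalar inequality on the exponential rates. The only substantive difference is the point at which you evaluate the denominator: you take the true maximizer $\ell^{\ast}-1=\tfrac{2k(n-2)}{en(n-1)}$, giving rate $\nu=\tfrac{2(n-2)}{e(n-1)}$, whereas the paper takes $\ell-1=\tfrac{k(n-2)}{n(n-1)}$, giving rate $\tfrac{n-2}{n-1}\ln 2$; since $2/e>\ln 2$ your choice yields a strictly larger rate and hence a bit of extra slack in the final constant comparison. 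Your explicit reduction of the stray factor $x$ to the single criterion $\delta\ge\tfrac{1}{2e^{2}c}$ is also more transparent than the paper's closing ``holds because $n\ge10$, $c>3$, $k/n\ge1$'' step, and in fact that extra slack is exactly what makes the numerical check go through comfortably at the boundary $n=10$, $c\downarrow 3$.
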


\begin{proof}
Given $n\leq k \leq \frac{n(n-1)}{2}$ and $x< \frac{k}{2ecn}$,
now we give an upper bound for $\sum_{\ell=1}^x a_{n,k,\ell}$
and a lower bound for $\sum_{\ell=1}^n a_{n,k,\ell}$.

First we make the following claim.
For all $1\leq\ell\leq x<\frac{k}{2ecn}$, we have
\begin{equation}
\left(\frac{2ek(n-2)}{(\ell-1)n(n-1)}\right)^{\ell-1} < 
\left(\frac{4e^2 c(n-2)}{n-1}\right)^{\frac{k}{2ecn}}.
\end{equation}

Now we prove the claim. In the following inequalities, we take $\log$ to have base 2.
\begin{align}
\left(\frac{2ek(n-2)}{(\ell-1)n(n-1)}\right)^{\ell-1}
&= 2^{(\ell-1)\log\left(\frac{2ek(n-2)}{(\ell-1)n(n-1)}\right)}\nonumber\\
&=2^{(\ell-1)\log\frac{1}{\ell-1}+(\ell-1)\log\left(\frac{2ek(n-2)}{n(n-1)}\right)}\nonumber\\
&\leq 2^{\frac{k}{2ecn}\log\left(\frac{2ecn}{k}\right)
+\frac{k}{2ecn}\log\left(\frac{k}{2ecn}\cdot\frac{4e^2c(n-2)}{n-1}\right)}\label{eq:xlogx}\\
&= 2^{\frac{k}{2ecn}\left(\log\left(\frac{2ecn}{k}\right)+\log\left(\frac{k}{2ecn}\right)
+\log\left(\frac{4e^2c(n-2)}{n-1}\right)\right)}\nonumber\\
&= 2^{\frac{k}{2ecn}\log\left(\frac{4e^2c(n-2)}{n-1}\right)}\nonumber\\
&=\left(\frac{4e^2c(n-2)}{n-1}\right)^{\frac{k}{2ecn}}\nonumber
\end{align}

In inequality~\ref{eq:xlogx}, we use the fact that for any $1\leq A\leq B$, 
$A\log\left(\frac{1}{A}\right)\leq B\log\left(\frac{1}{B}\right)$
(specifically, we used $A=\ell-1$ and $B=\frac{k}{2ecn}$).

Now we have

\begin{align*}
\sum_{\ell=1}^x a_{n,k,\ell} &\leq \sum_{\ell=1}^x 
\frac{2k}{n(n-1)}\left(\frac{2ek(n-2)}{(\ell-1)n(n-1)}\right)^{\ell-1} \\
&= \frac{2k}{n(n-1)} \sum_{\ell=1}^x 
\left(\frac{2ek(n-2)}{(\ell-1)n(n-1)}\right)^{\ell-1}\\
&\leq \frac{2k}{n(n-1)} \sum_{\ell=1}^x 
\left(\frac{4e^2 c(n-2)}{n-1}\right)^{\frac{k}{2ecn}}\\
&\leq \left(\frac{2k}{n(n-1)}\right)\cdot x\cdot 
\left(\frac{4e^2 c(n-2)}{n-1}\right)^{\frac{k}{2ecn}}.
\end{align*}

Now we give the lower bound for the other summation which goes from
$\ell=1$ to $n$.
We lower bound the whole summation by a single term of the summation,
$\ell=\frac{k(n-2)}{n(n-1)}+1$.
Recall that $k\leq \frac{n(n-1)}{2}$, which implies 
$\frac{k}{n}\leq \frac{n-1}{2}<n$, 
so $\ell=\frac{k(n-2)}{n(n-1)}+1$ is indeed between $1$ and $n$.

\begin{align*}
\sum_{\ell=1}^n a_{n,k,\ell} &= \sum_{\ell=1}^n 
\frac{2k}{n(n-1)}\left(\frac{2k(n-2)}{(\ell-1)n(n-1)}\right)^{\ell-1}\\
&\geq \frac{2k}{n(n-1)} \sum_{\ell=\frac{k(n-2)}{n(n-1)}+1}^{\frac{k(n-2)}{n(n-1)}+1}
\left(\frac{2k(n-2)}{(\ell-1)n(n-1)}\right)^{\ell-1}\\
&= \frac{2k}{n(n-1)}\left(2\right)^{\frac{k(n-2)}{n(n-1)}}
\end{align*}

Therefore, 
\begin{align*}
b(k,x)&=\frac{\sum_{\ell=1}^x a_{n,k,\ell}}{\sum_{\ell=1}^n a_{n,k,\ell}}\\
&\leq \frac{\left(\frac{2k}{n(n-1)}\right)\cdot x\cdot 
\left(\frac{4e^2 c(n-2)}{n-1}\right)^{\frac{k}{2ecn}}}
{\frac{2k}{n(n-1)}\left(2\right)^{\frac{k(n-2)}{n(n-1)}}}\\
&\leq x\cdot \left(4e^2c\right)^{\frac{k}{2ecn}}\cdot\left(2\right)^{-\frac{k(n-2)}{n(n-1)}}\\
&\leq 2^{\log x}\cdot \left(2\right)^{\log\left(4e^2c\right)\cdot \frac{k}{2ecn}}
\cdot \left(2\right)^{-\frac{k(n-2)}{n(n-1)}}\\
&\leq 2^{\log\left(\frac{k}{2ecn}\right)+\frac{k}{2ecn}\log\left(4e^2c\right)
-\frac{k(n-2)}{n(n-1)}}\\
&= 2^{-\frac{k}{n}\left(\frac{n-2}{n-1}-\frac{1}{2ec}\log\left(4e^2c\right)\right)
-\log\left(\frac{k}{2ecn}\right)}\\
&\leq 2^{-\frac{k}{2n}}
\end{align*}
The final inequality holds because $n\geq 10$, $c>3$, and $\frac{k}{n}\geq 1$.

\end{proof}

The proof of Theorem~\ref{thm:characterization} follows immediately by
combining Lemmas~\ref{lem:upper} and~\ref{lem:lower}.
\section{Details from Section~\ref{sec:experiments} (Experiments)}\label{app:experiments}

In this section, we give more details from Section~\ref{sec:experiments}, and we give more experiments on NASBench-201.
First we describe the algorithms used in the experiments in Section~\ref{sec:experiments}.
\begin{itemize}
    \item Random Search consists of randomly choosing architectures and then training them, until
    the runtime budget is exceeded.
    \item Regularized evolution~\citep{real2019regularized} consists of maintaining a population
    of neural architectures. In each iteration, a subset is selected and the best architecture from the subset is mutated. The mutation replaces the oldest architecture from the population. 
    We used a population size of 30. We also found that replacing the worst architecture
    (not the oldest) performed better, so we used this version.
    \item Local search~\citep{white2020local} is a simple greedy algorithm that has only 
    recently been
    applied to NAS. We use the simplest instantiation (often called the hill-climbing algorithm).
    \item Bayesian optimization (BO) is a strong method for zeroth order optimization. We use 
    the ProBO~\citep{neiswanger2019probo} implementation, which uses a Gaussian process kernel
    and expected improvement as the acquisition function.
    \item NASBOT~\cite{nasbot} is a BO-based NAS algorithm. It was not originally
    defined for cell-based search
    spaces, so we use a variant that works for cell-based spaces~\citep{bananas}.
    \item BANANAS~\citep{bananas} is a BO-based method which uses a neural predictor model.
\end{itemize}

\subsection{Experiments on NASBench-201}

In this section, we give similar experiments to Figure~\ref{fig:encoding_results}, but with
NASBench-201 instead of NASBench-101.
Note that NASBench-201 is not as good for encoding experiments because every single architecture
has the same graph structure - a clique of size 4. The only differences are the operations.
Therefore, many encodings are functionally equivalent. 
For example, the one-hot, categorical, and continuous adjacency matrix encodings are all identical 
because the only difference is the way they encode the adjacency matrix.
I.e., these encodings will all look like a set of operations, plus some adjacency matrix encoding
that is the same for every architecture in the search space.
The one-hot adjacency matrix encoding, path encoding, and truncated path encoding are all
distinct from one another, so we run experiments with these encodings.
See Figure~\ref{fig:201_results}.
We see largely the same trends as in NASBench-101 (Figure~\ref{fig:encoding_results}).
Note that on the ImageNet-16-120 dataset, some algorithms such as NASBOT 
overfit to the training set, causing performance to decline over time.

\begin{figure*}
\centering %
\includegraphics[width=0.33\textwidth]{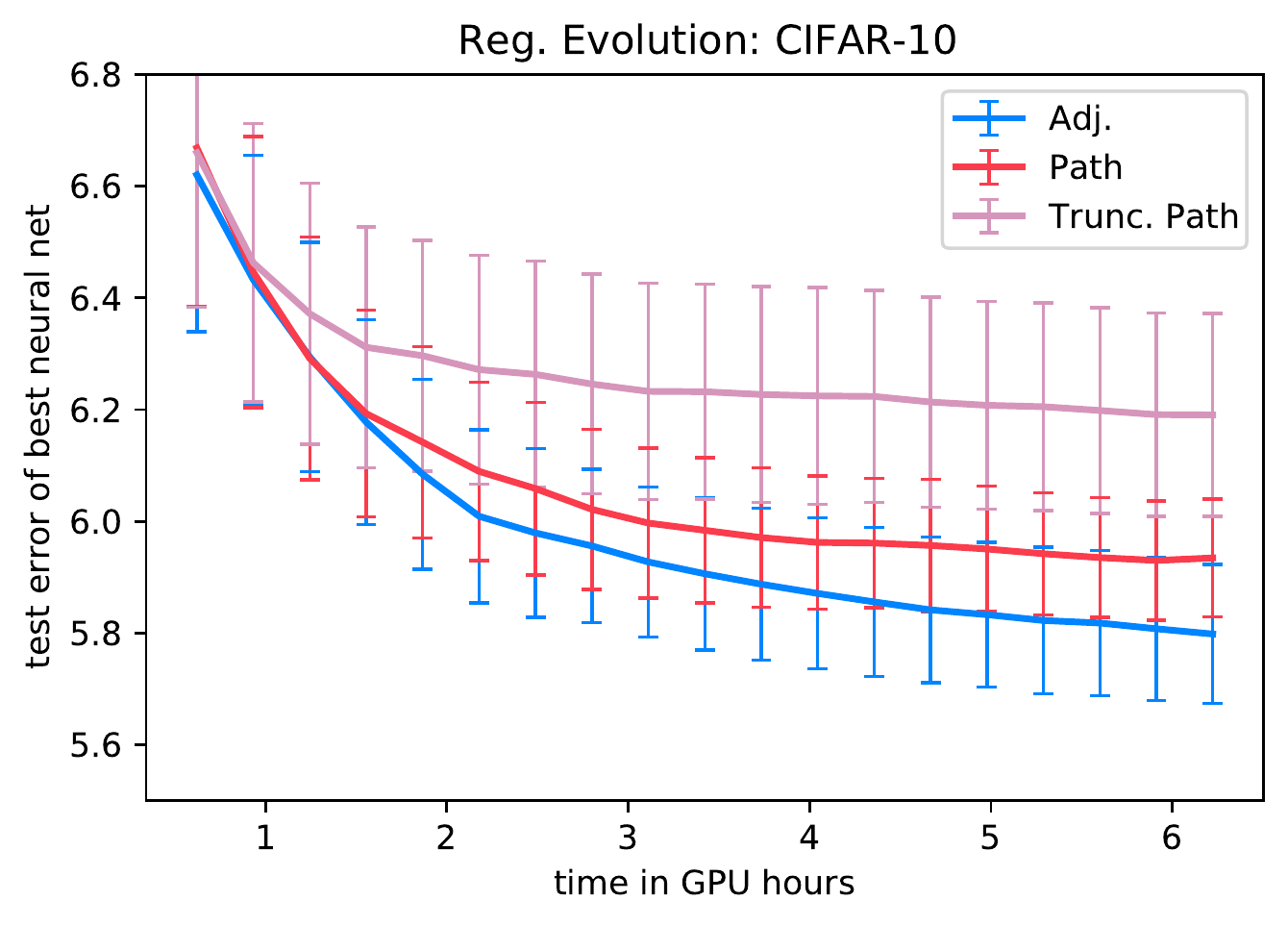}
\hspace{-3pt}
\includegraphics[width=0.33\textwidth]{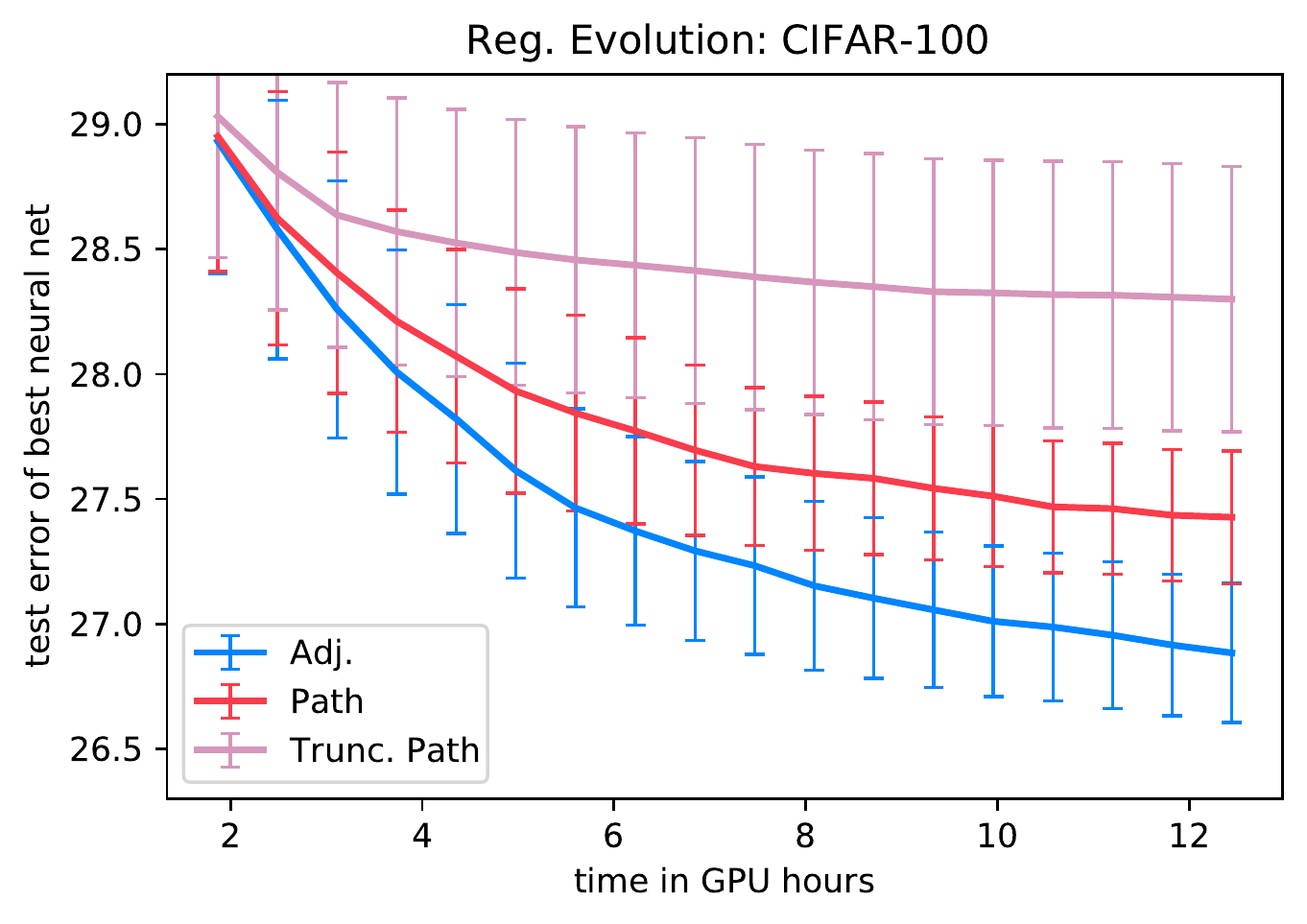}
\hspace{-3pt}
\includegraphics[width=0.32\textwidth]{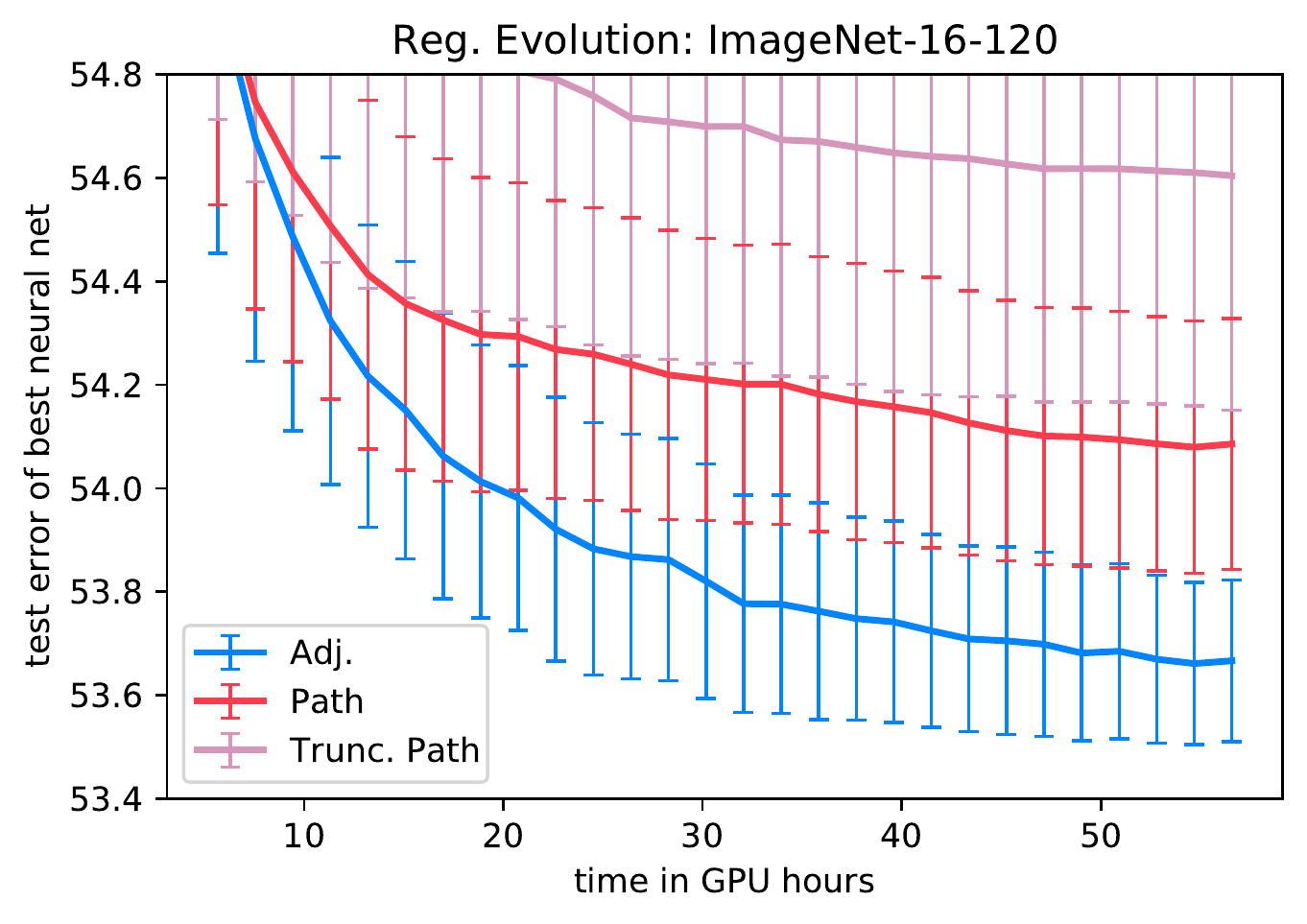}
\includegraphics[width=0.33\textwidth]{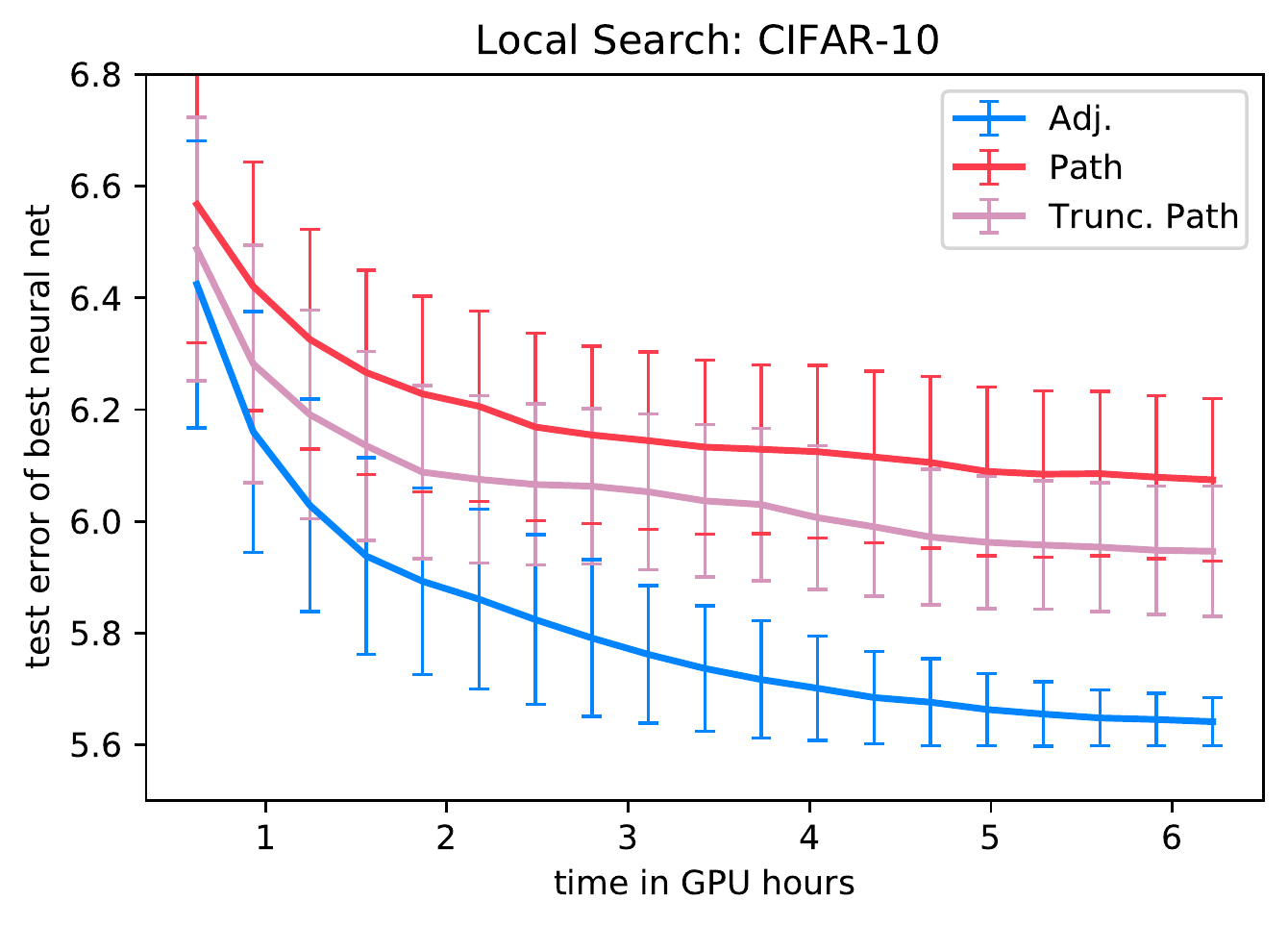}
\hspace{-3pt}
\includegraphics[width=0.33\textwidth]{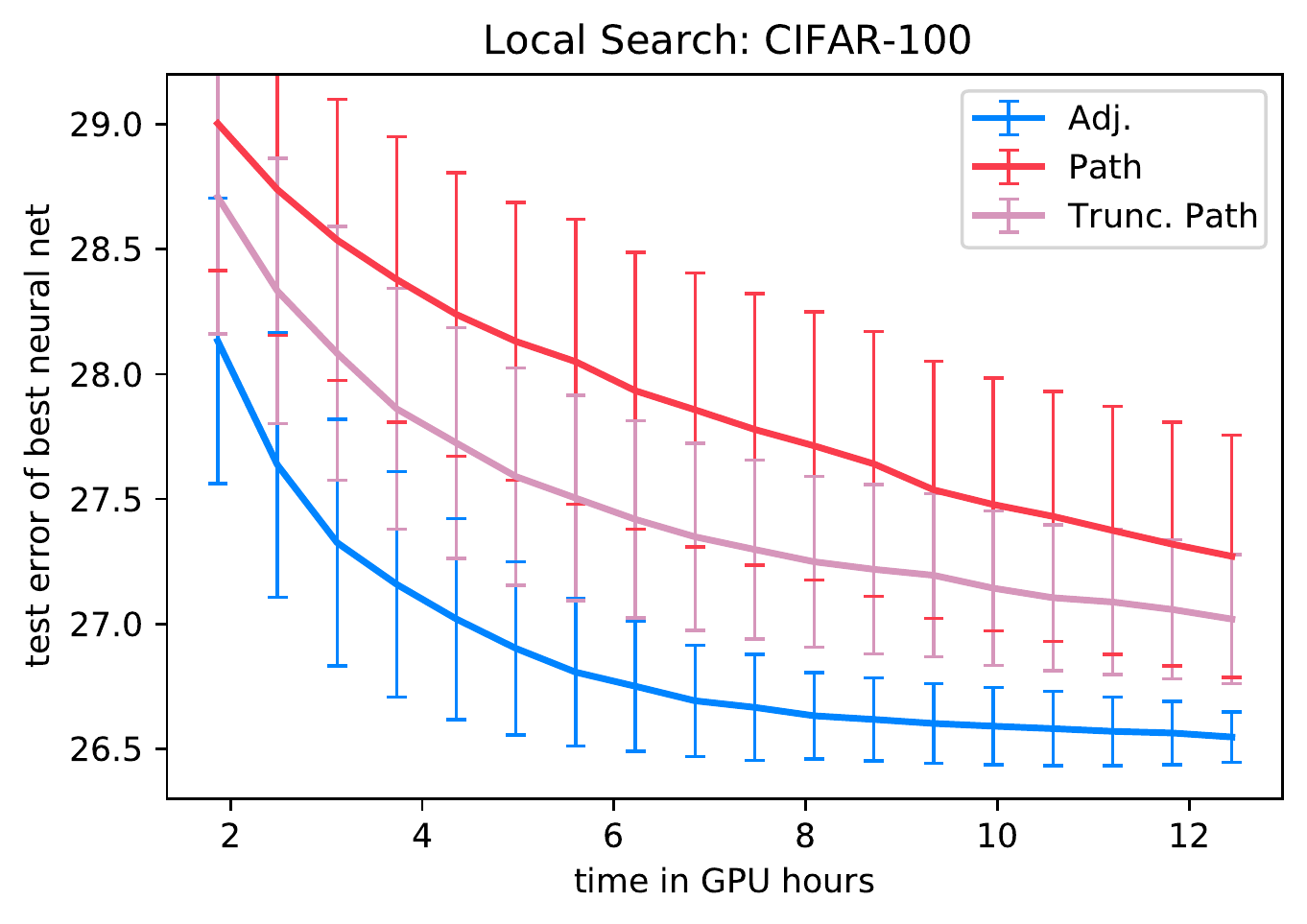}
\hspace{-3pt}
\includegraphics[width=0.32\textwidth]{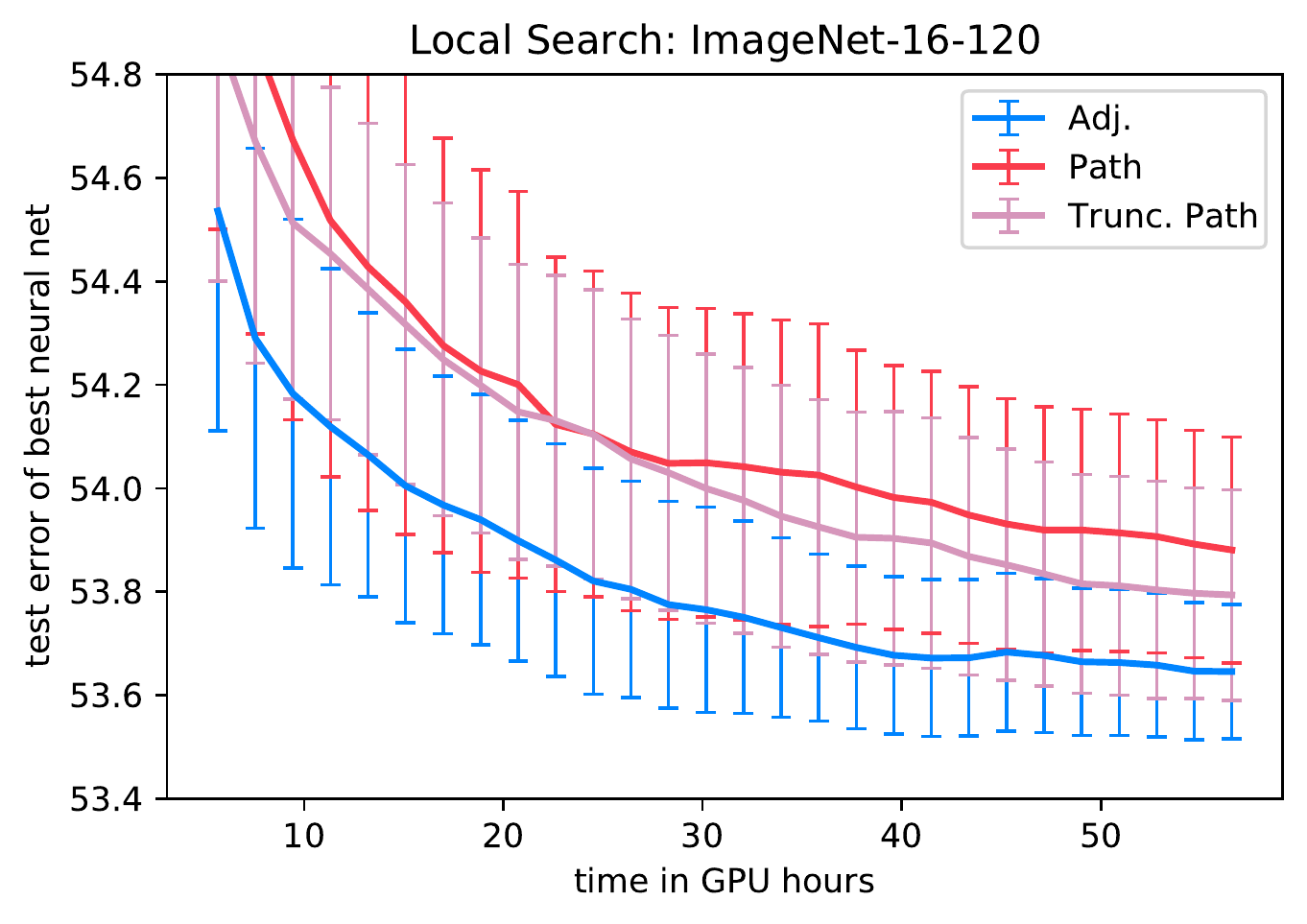}
\includegraphics[width=0.33\textwidth]{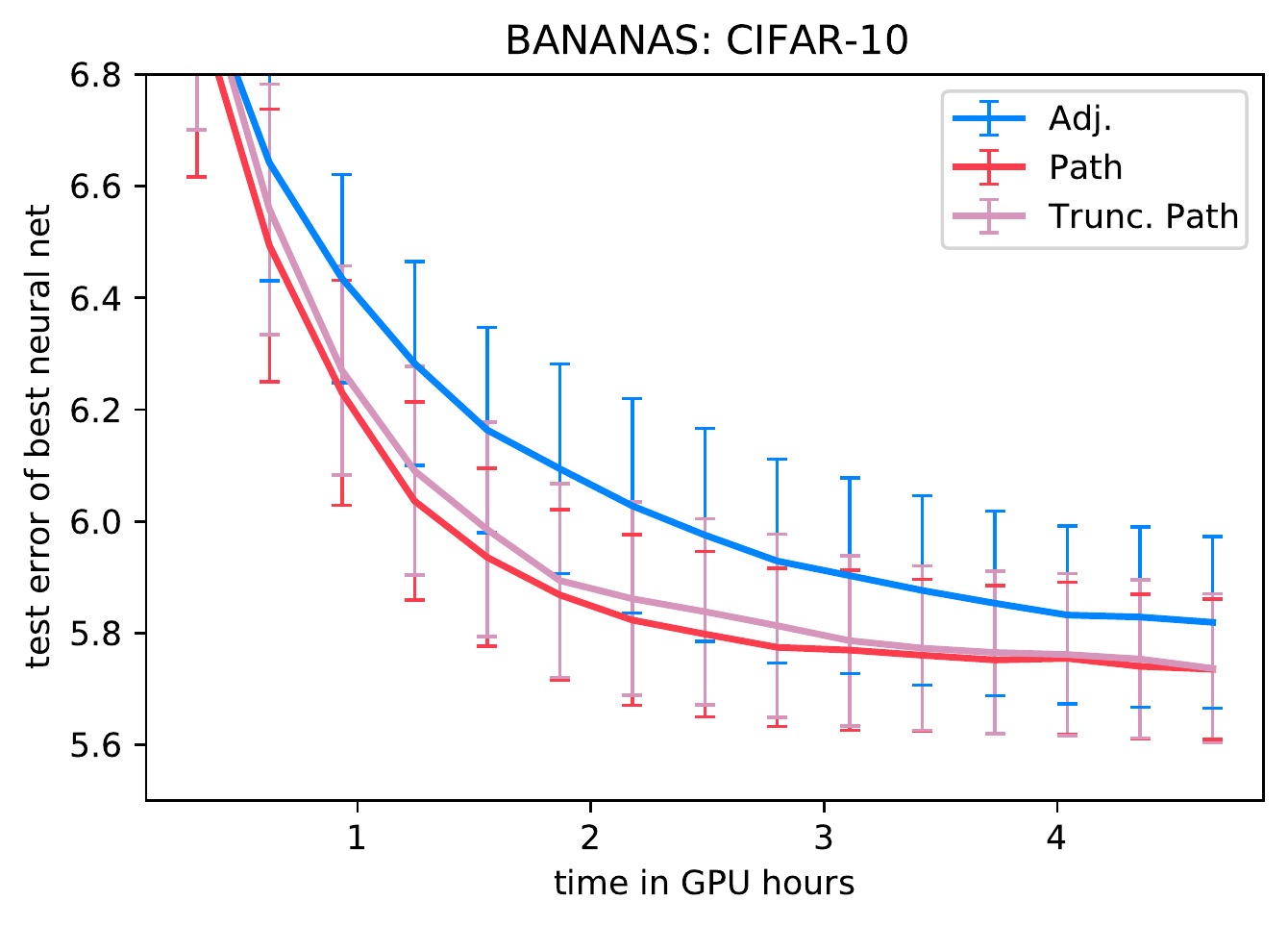}
\hspace{-3pt}
\includegraphics[width=0.33\textwidth]{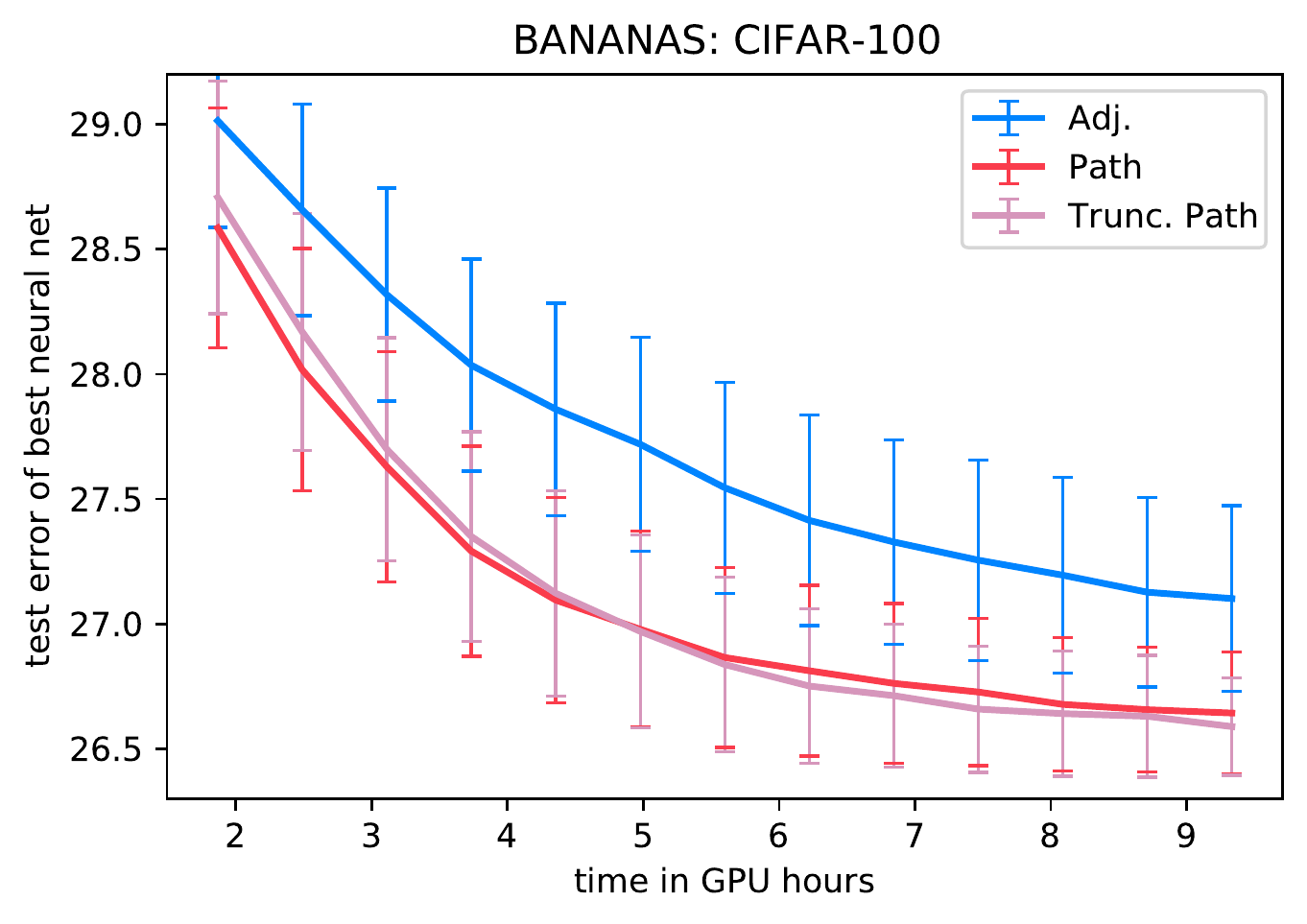}
\hspace{-3pt}
\includegraphics[width=0.32\textwidth]{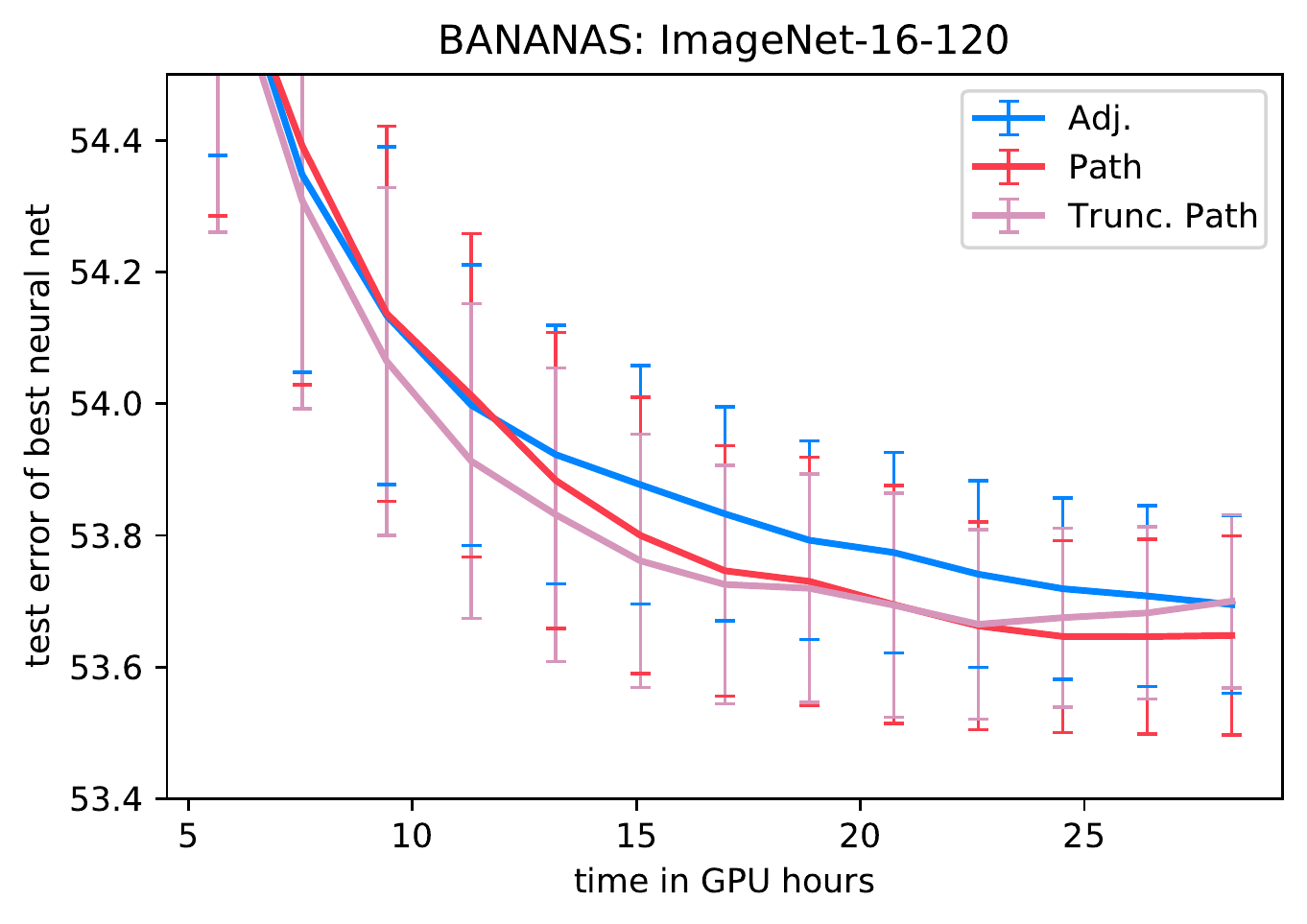}
\includegraphics[width=0.33\textwidth]{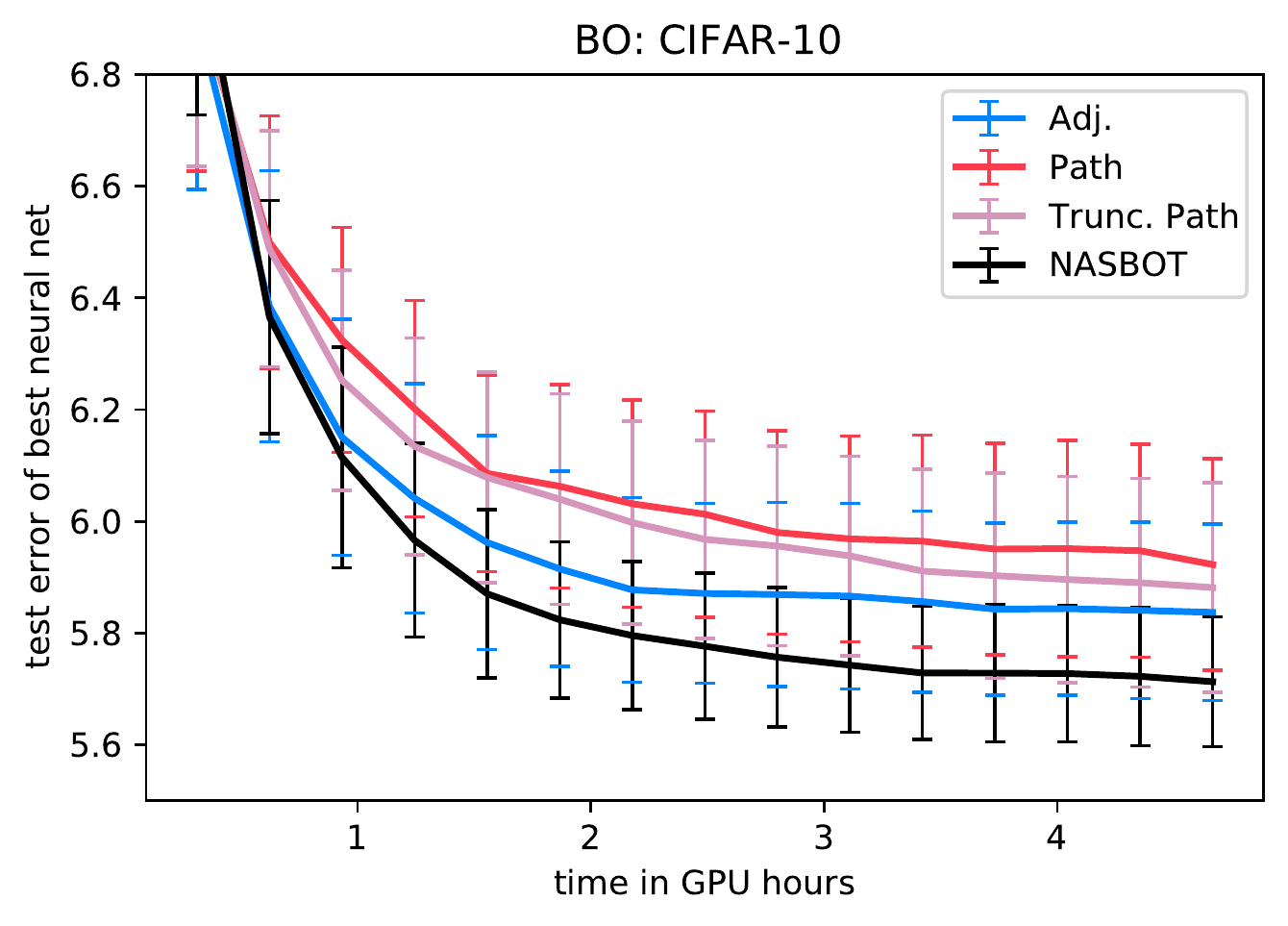}
\hspace{-3pt}
\includegraphics[width=0.33\textwidth]{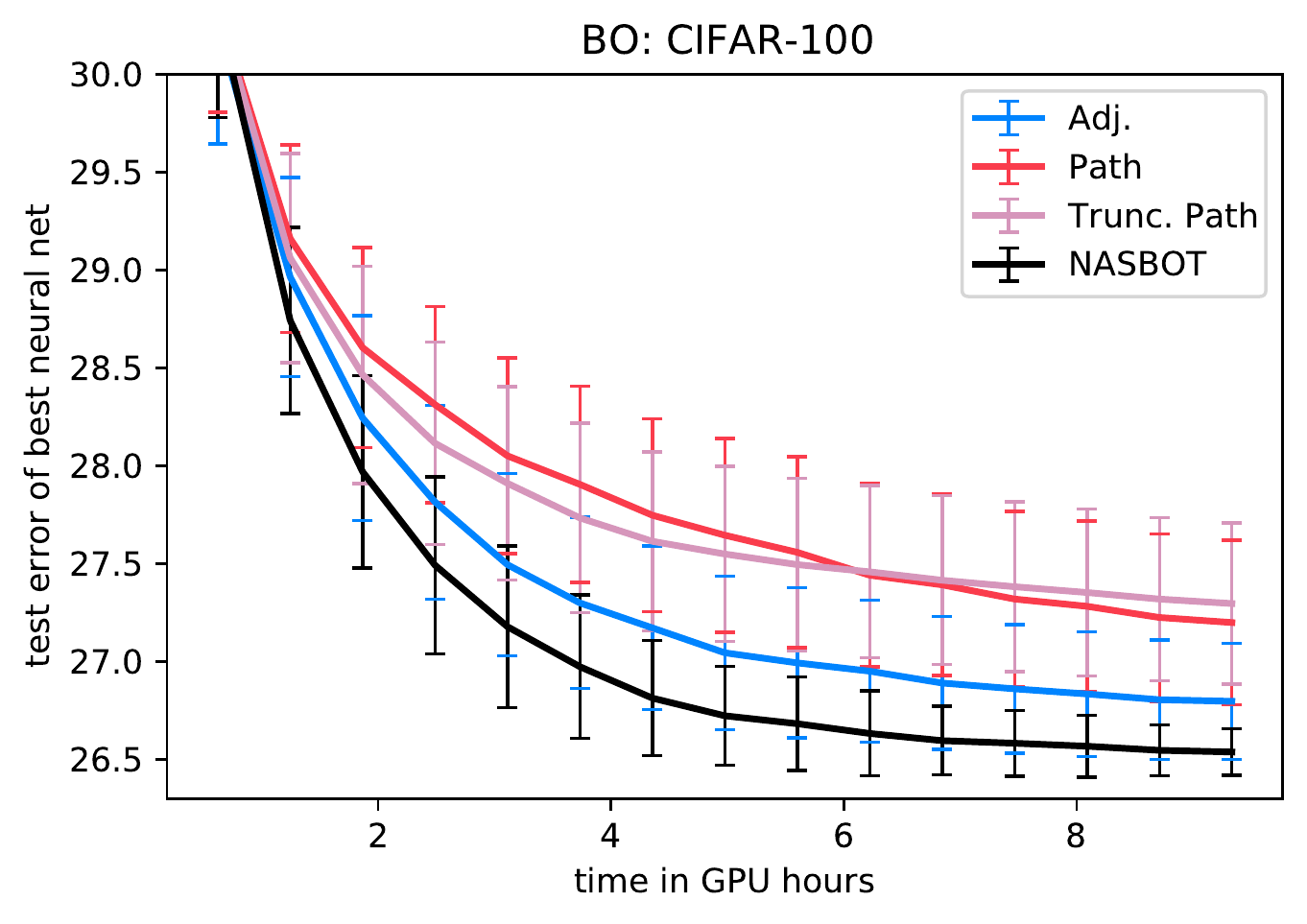}
\hspace{-3pt}
\includegraphics[width=0.32\textwidth]{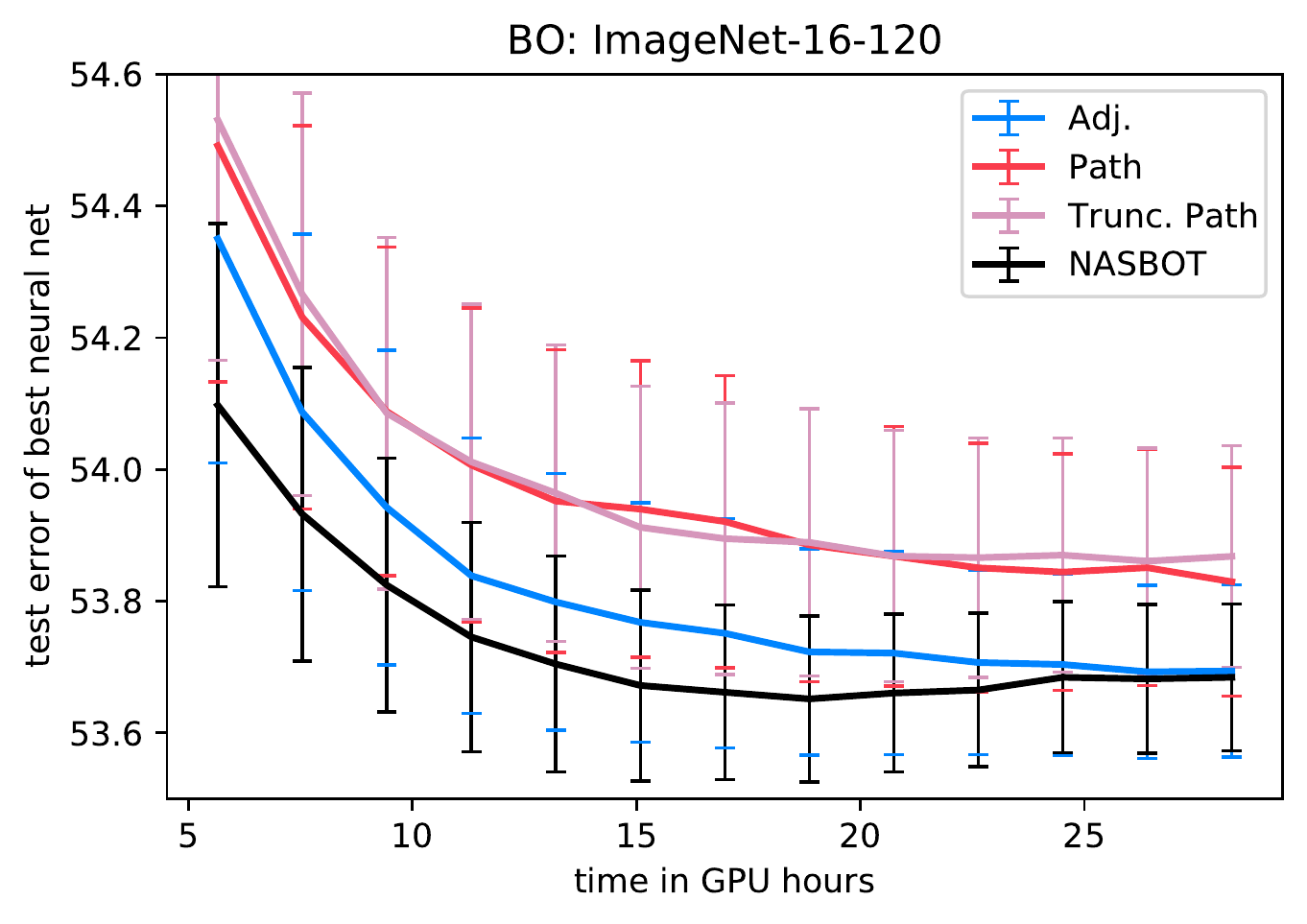}
\caption{
Experiments on NASBench-201 with different encodings, keeping all but one subroutine
fixed: \emph{perturb architecture} (Reg. evolution (top row), local search (second row)),
\emph{train predictor model} (BANANAS (third row), Bayesian optimization (bottom row)).
}
\label{fig:201_results}
\end{figure*}

\subsection{Best practices for NAS}
Many authors have called for improving the reproducibility and fairness in experimental comparisons in
NAS research~\citep{randomnas, nasbench, yang2019evaluation}, which has led to the release of a
NAS best practices checklist~\citep{lindauer2019best}.
We address each section and we encourage future work to do the same.
\begin{itemize}
    \item \textbf{Best practices for releasing code.}
    We released our code publicly. 
    We used the NASBench-101 and NASBench-201 datasets, 
    so questions about training pipeline, evaluation,
    and hyperparameters for the final evaluation do not apply.
    \item \textbf{Best practices for comparing NAS methods.}
    We made fair comparisons due to our use of NASBench-101 and NASBench-201.
    We did run ablation studies and ran random search.
    We performed 300 trials of each experiment on NASBench-101 and NASBench-201.
    \item \textbf{Best practices for reporting important details.}
    We used the hyperparamters straight from the open source repositories, 
    with a few small exceptions listed earlier in this section.
\end{itemize}

\end{document}